\documentclass{article}
\usepackage[preprint]{neurips_2025}
\usepackage{verbatim}


\usepackage[utf8]{inputenc} 
\usepackage[T1]{fontenc}    
\usepackage{hyperref}       
\usepackage{url}            
\usepackage{shuffle}
\usepackage{booktabs}       
\usepackage{amsfonts}       
\usepackage{nicefrac}       
\usepackage{microtype}      
\usepackage{xcolor}    
\usepackage{amssymb}
\usepackage{amsthm}
\usepackage{graphicx}
\usepackage{subcaption}
\usepackage{siunitx} 
\usepackage{multirow}

\newtheorem{theorem}{Theorem}
\newcommand{\N}{\mathbb{N}}

\newtheorem{remark}[theorem]{Remark}
\newtheorem{lemma}[theorem]{Lemma}
\newtheorem{definition}[theorem]{Definition}
\newtheorem{example}[theorem]{Example}
\usepackage{amsmath}
\newcommand{\noteNi}[1]{{
$\triangleright$\textcolor{red}{\textbf{Ni}: #1}}
}
\usepackage{algorithm}
\usepackage{algorithmicx}
\usepackage{algpseudocode}

\newcommand{\E}{\mathbb E}
\newcommand{\Var}{\mathrm{Var}}
\newcommand{\Cov}{\mathrm{Cov}}
\newcommand{\mes}{ \mathrm{d} }

\newcommand{\pmc}[2]{$#1{\scriptstyle{}\mkern-1mu\pm\mkern-1mu#2}$}

\title{Sig-DEG for Distillation: Making Diffusion Models Faster and Lighter}

%

\author{%
  Lei Jiang\\ 
  University College London\\
  \texttt{lei.j@ucl.ac.uk} \\
   \And
   Wen Ge  \thanks{This author is expected to start her study at UCL in September 2025.}\\
  University College London\\
\texttt{wen.ge@alumni.lse.ac.uk}
 \And
   Niels Cariou-Kotlarek \\
   University College London \\
\texttt{niels.kotlarek.23@ucl.ac.uk}
   \And
   Mingxuan Yi \footnotemark[2]\\
   JPMorganChase\\
   \texttt{mingxuan.yi@jpmorgan.com} 
   \And
   Po-Yu Chen \thanks{\textbf{Disclaimer} - The authors' views are their own. The authors' employers past and present make no representation and warranty whatsoever and disclaim all liability, for the completeness, accuracy or reliability of the information contained herein. This document is not intended as investment research or investment advice, or a recommendation, offer or solicitation for the purchase or sale of any security, financial instrument, financial product or service, or to be used in any way for evaluating the merits of participating in any transaction, and shall not constitute a solicitation under any jurisdiction or to any person, if such solicitation under such jurisdiction or to such person would be unlawful. }\\
   JPMorganChase\\
   \texttt{po-yu.chen@jpmorgan.com}\\
   \And
   Lingyi Yang \\
  University of Oxford \\
\texttt{lingyi.yang@maths.ox.ac.uk} \\
   \And
   Francois Buet-Golfouse\footnotemark[2]\\
   AIML Global Markets, Barclays\\
   \texttt{francois.buetgolfouse@barclays.com}
   \And
   Gaurav Mittal\footnotemark[2]\\
   AIML Global Markets, Barclays\\
   \texttt{gaurav.mittal1@barclays.com}
   \And
   Hao Ni \\
   University College London \\
   \texttt{h.ni@ucl.ac.uk} 
}

\begin{document}

\maketitle

\vspace{-1.3em}

\begin{abstract}
Diffusion models have achieved state-of-the-art results in generative modelling but remain computationally intensive at inference time, often requiring thousands of discretization steps. To this end, we propose Sig-DEG (Signature-based Differential Equation Generator), a novel generator for distilling pre-trained diffusion models, which can universally approximate the backward diffusion process at a coarse temporal resolution. Inspired by high-order approximations of stochastic differential equations (SDEs), Sig-DEG leverages partial signatures to efficiently summarize Brownian motion over sub-intervals and adopts a recurrent structure to enable accurate global approximation of the SDE solution. Distillation is formulated as a supervised learning task, where Sig-DEG is trained to match the outputs of a fine-resolution diffusion model on a coarse time grid. During inference, Sig-DEG enables fast generation, as the partial signature terms can be simulated exactly without requiring fine-grained Brownian paths. Experiments demonstrate that Sig-DEG achieves competitive generation quality while reducing the number of inference steps by an order of magnitude. Our results highlight the effectiveness of signature-based approximations for efficient generative modeling.
\end{abstract}

\section{Introduction}
Diffusion models \citep{sohl2015deep, ho2020denoising, song2021scorebased} have become a cornerstone of modern generative modelling, achieving state-of-the-art performance in high-fidelity image synthesis and beyond \citep{dhariwal2021diffusion}. These models define a generative process by reversing a stochastic differential equation (SDE) that gradually transforms data into noise. By learning to approximate the time-reversal of this process, diffusion models can sample from complex data distributions by iteratively denoising from a noise prior.

However, the generation process in diffusion models remains computationally expensive, often requiring hundreds or thousands of fine-grained steps to discretize the reverse SDE. This creates a significant bottleneck for real-time or resource-constrained applications. A growing body of work seeks to accelerate inference by distilling high-quality, pre-trained diffusion models into compact surrogates that retain generation fidelity (e.g., consistency model \citep{song2023consistency} and phase consistency model \citep{kim2023consistency}). Yet most existing approaches rely on architectural heuristics or ignore the underlying SDE structure altogether. 

In this work, we propose a principled alternative: Signature-based Differential Equation Generator (\textbf{Sig-DEG}), a fast and lightweight surrogate for pre-trained diffusion models that directly approximates the reverse-time SDE using tools from stochastic analysis and rough path theory. Our method builds on the observation that high-order numerical schemes -- particularly those based on stochastic Taylor expansions -- allow for accurate approximation of SDE solutions using coarse discretizations. Specifically, we introduce the use of \textit{partial signatures}, a low-dimensional and tractable subset of path signatures \citep{lyons2007differential}, to efficiently summarize the Brownian motion trajactories over each sub-time interval and enable accurate global approximation of the SDE solution. The partial signature acts as a principled feature extractor for Brownian motion, whose dimension is independent of the underlying discretization resolution. Compared with directly using discretized Brownian motion, this representation substantially reduces dimensionality while preserving essential information of SDE approximation, thereby improving both accuracy and robustness.  

Sig-DEG models the generative process as a discrete-time recurrence over coarse time steps. At each step, a recurrent neural network consumes the current latent state and the analytically simulable partial signature to produce the next point in the reverse trajectory. Training is performed via supervised distillation from a high-resolution teacher model using a simple mean squared error objective. Crucially, because the distribution of partial signatures is known in closed form, Sig-DEG avoids the need to reconstruct fine-resolution Brownian paths at inference time -- leading to orders-of-magnitude acceleration while maintaining high generative quality. The overall workflow of training Sig-DEG for distilling the diffusion model is given in Figure \ref{fig:sigDeg}.

\begin{figure}[h]
\centering
\includegraphics[width = 0.8\textwidth]{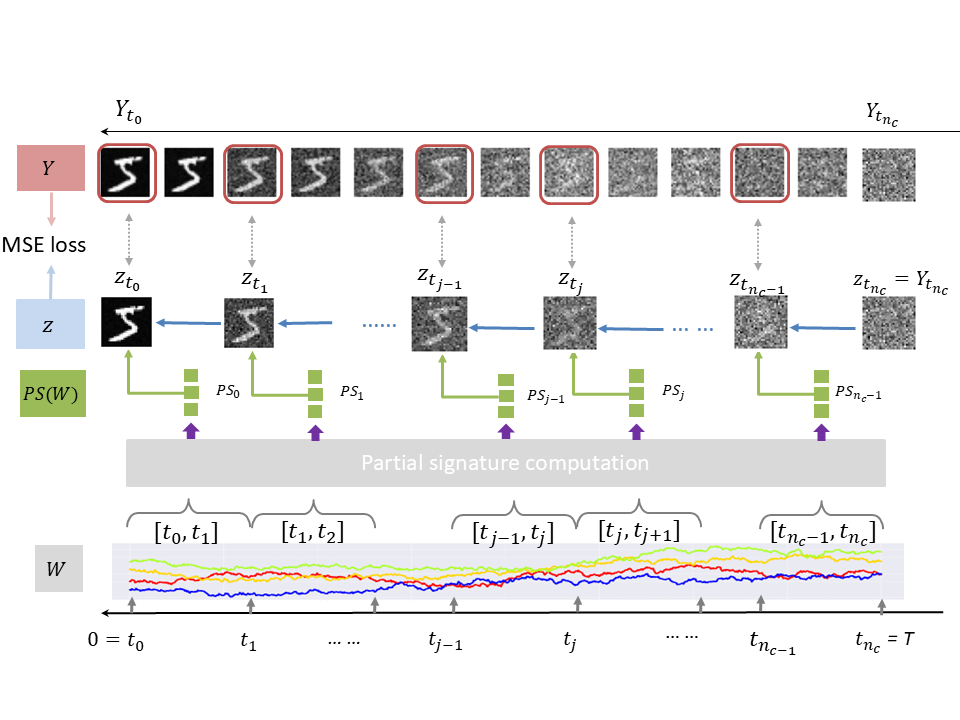}
\caption{\textbf{Overview of Sig-DEG for distilling diffusion models.} A teacher diffusion model generates fine-resolution reverse trajectories driven by Brownian noise. Instead of reconstructing these full trajectories, Sig-DEG leverages analytically tractable partial signatures to summarize the noise over coarse intervals. A generator then consumes the noisy latent state and partial signatures to produce the next step in the backward process. Training is performed via supervised distillation with a mean squared error loss, ensuring that Sig-DEG learns to approximate the teacher model on a coarse time grid. This design allows Sig-DEG to achieve orders-of-magnitude acceleration in sampling while preserving the theoretical structure of the underlying SDE. }\label{fig:sigDeg}
\end{figure}

Unlike consistency-based distillation methods, which enforce trajectory coherence through specialized losses or architectural constraints, Sig-DEG naturally respects the underlying SDE structure. Its design is grounded in approximation theory, offering both theoretical guarantees and empirical efficiency.

\noindent\textbf{Contributions.} Our work introduces Sig-DEG, a novel SDE-based distillation framework, with the following key contributions:
\vspace{-0.3em}
\begin{enumerate}
\item We reformulate distillation as a supervised learning task, learning the mapping between the driving noise trajectory and the reverse diffusion process on a coarse time grid.
\item We propose the Sig-DEG model, which is theoretically grounded in high-order stochastic Taylor expansions and leverages the closed-form distribution of partial signatures for exact simulation and fast inference.
\item We demonstrate strong performance across diverse modalities -- including images and rough volatility time series -- achieving $50\text{--}100\times$ faster inference with competitive generation quality.
\end{enumerate}

\subsection{Related Work}
\subsubsection{Consistency-based Distillation Models}
Consistency models (CMs) \citep{song2023consistency} aim to bypass the slow, stepwise sampling in diffusion models by learning a direct map from noisy inputs to clean data, enforcing consistency across different noise levels. This enables fast sampling -- often in a single step -- while retaining optional refinement via multiple evaluations.

Consistency Trajectory Models (CTMs) \citep{kim2023consistency} generalize this approach to multistep sampling by training networks that remain consistent along entire noise trajectories. More recently, Phased Consistency Models (PCMs) \citep{wang2024phased} focus training on the boundary points actually used at inference time, reducing redundancy in intermediate supervision. Consistency models and their variants learn to approximate the solution operator of the Probability Flow ODE, mapping a noisy state $X_t$ (from the forward process) directly back to a clean sample $X_0$. This is in stark contrast to our setting, where the goal of our method is to learn the mapping between the noise trajectory and the backward process evaluated on a coarse time partition. Moreover, the architectures of CTMs and PCMs are based on ordinary differential equations (ODEs) and uses the skip connections, but they do not exploit the intrinsic structure of the underlying SDEs, which our method leverages.


\subsubsection{Neural Differential Equation Models}
Our work is related to the broader class of neural differential equation models, including neural ODEs \citep{chen2018neural}, neural controlled differential equations (CDEs) \citep{kidger2020neural, morrill2021neuralcontrolled}, and neural SDEs \citep{kidger2021neural}. These models represent data trajectories as solutions to parametrized continuous-time dynamics, enabling flexible modelling of temporal processes. Of particular relevance are neural SDEs, which model stochastic trajectories directly but typically rely on low-order solvers such as Euler--Maruyama, limiting their scalability for inference-intensive tasks like diffusion sampling.

Sig-DEG also connects to a line of work on neural approximations of controlled differential equations using path signatures -- especially Logsig-RNNs \citep{liao2019learning, liao2021logsig} and Neural rough differential equations (RDEs) \citep{morrill2021neural}. These architectures leverage signature features to model complex temporal dependencies, drawing from rough path theory. However, existing signature-based models often require intractable simulations of high-order signature terms, particularly for stochastic drivers like Brownian motion.

In contrast, Sig-DEG leverages a carefully chosen subset of signature terms, which we call \textit{partial signatures}, that are both analytically tractable and sufficient for high-order SDE approximation. This preserves theoretical fidelity while achieving fast, exact simulation and practical speed-ups.


\section{Preliminaries}\label{sec:prelims}
In this section, we introduce the preliminaries on the diffusion model for data generation and the high-order approximation of stochastic differential equations based on stochastic Taylor expansion.
\subsection{Diffusion Model}
Diffusion models are a class of generative models that learn to synthesize data by modeling the reversal of a stochastic noising process. The \textit{forward process}, also known as the diffusion process, incrementally adds Gaussian noise to data over a fixed number of time steps $T$. More precisely
    \begin{eqnarray}\mes X_t = f(X_t, t)\mes t+\sigma_t\cdot\mes W_t, \quad X_0 \sim p_{\text{data}},\label{eqn:forward}
    \end{eqnarray}
    where $t\in [0, T]$, $f(X_t, t)$ and $\sigma_t$ represent the drift and diffusion terms respectively, and $\cdot$ represents the Itô integral where $\mes W_t$ is the standard Wiener process.

Given the forward noising process to perturb data into noise, there exists an associated denoising process to reverse the forward noising process \citep{song2021scorebased}, known as the \textit{backward process}
     \begin{eqnarray}\mes Y_t = (f(Y_t, t) -\sigma_t^2 \nabla \log p(t, Y_t)) \mes t+\sigma_t \mes \tilde{W}_t, \quad Y_T \sim p_{\text{noise}},\label{eqn:backward}
 \end{eqnarray}
 where $\tilde{W}_t$ is the reversed standard Wiener process, with time $t$ evolving backwards, and
$\log p(t, Y_t)$ is the score function of the marginal probability $p(t, \cdot)$. The score function can be modelled via a time-dependent neural net, $s_\theta(X_t, t)$, and training diffusion models can be achieved by minimizing the score matching objective 
\begin{eqnarray}
\mathbb{E}_{t \sim [0, T]} \mathbb{E}_{X_0 \sim p_{\text{data}}} \, \mathbb{E}_{X_t \sim p(X_t | X_0)} \left[ \lambda(t) \left\| \nabla_{X_t} \log p(X_t | X_0) - s_\theta(X_t, t) \right\|_2^2 \right],
\end{eqnarray}
where $\lambda(t)$ is a weighting function.


\subsection{High-Order Approximation of SDEs}\label{subsec: SDE_approx}
It is well-known that there is a one-to-one, explicit correspondence between the It\^o and Stratonovich integral (see Appendix \ref{sec:ito_strat}). Given that numerical approximation theory has a simpler representation for Stratonovich SDEs compared to Itô SDEs, we consider the general SDE in the Stratonovich sense as follows:
\begin{eqnarray}\label{eqn:SDE}
    dY_t = \mu(t, Y_t)dt+\sigma_t \circ dW_t, \quad \forall t \in [0, T],
\end{eqnarray}
where $\circ$ is the Stratonovich integral. In fact, in the diffusion model, as $\sigma$ is a deterministic function w.r.t. $t$, the forward / backward process defined by Eqn. \eqref{eqn:forward} / Eqn. \eqref{eqn:backward} coincides with their Stratonovich SDEs. For simplicity, we assume that the vector fields $\mu$ and $\sigma$ are smooth.
\paragraph{Stochastic Taylor Expansion.} By \cite{baudoin2012stochastic, kloeden1992stochastic}, one has that for any $0\leq s\leq t$ and $t-s$ is sufficiently small,
\begin{eqnarray*}
Y_t &\approx& Y_s + \mu(s, Y_s) (t-s) + \sigma_s (W_t-W_s) +  \partial_y \mu(s, Y_s) \sigma_s \int_{s}^{t}\int_{s}^{u} dW_{s_1} \mes u + \cdots.
\end{eqnarray*}
It means that there exists $\varepsilon>0$ and a function $F$ such that for any $|t-s| \leq \varepsilon$, the solution $Y_t$ can be approximated by 
\begin{eqnarray}\label{eqn:Y_approx}
Y_t \approx Y_s + F\left(s, Y_s, PS(W)_{s, t} \right), 
\end{eqnarray}
where $PS(W)_{s, t}:=(t-s, W_t-W_s, \int_{s}^{t}\int_{s}^{u} dW_{s_1}du)$. The function $F: \mathbb{R}^{+} \times E \times \mathbb{R}^{+} \times E \times E \rightarrow E$ has the following form: 
\begin{eqnarray*}
F(s, y, s_1, s_2, s_{12}) =   \mu(s,y)s_1+\sigma_s s_2+\partial_t\sigma_s (s_1 s_2-s_{21})  + \partial_y \mu(s,y) \sigma_s s_{21}.
\end{eqnarray*} The derivation of function $F$ can be found in Appendix \ref{app:taylor}. The features $PS(W)_{s, t}$ correspond to the informative subset of the signature (up to degree 2) of a time-augmented path $W$ from $s$ to $t$, and stands for \textit{partial signature}. It is easy to see the recurrence of the partial signature, for any $0\leq s\leq u\leq t$,
\begin{eqnarray}\label{eqn:ps_recursion}
PS_{s,u}+PS_{u,t} = PS_{s, t}+(0, 0, W_{s,u}(t-u)), \quad PS_{s,s} = 0. 
\end{eqnarray}
This simple recurrence enables the efficient computation of the partial signature of a piecewise linear approximation of Brownian motion, with linear-time complexity.
See Appendix \ref{sec:path_prelims} for the formal definition of the path signature and a rigorous statement of the stochastic Taylor expansion.
\paragraph{Numerical Approximation Scheme.} To approximate the terminal value $Y_T$ effectively, we discretize the time interval $[0, T]$ into a finite partition ${D}_{c} = \{0=t_0<t_1 < \cdots <t_{n_c}=T\}$ and let $\Delta t_{C} = \max_{i\in \{1, \cdots, n_c \}}(t_i-t_{i-1}) $ denote its time mesh. On each subinterval, we leverage the local approximation derived from the stochastic Taylor expansion, and construct a global approximation by iteratively composing these local solutions. More specifically, set $\hat{Y}_{t_0} = Y_0$. Then for $i \in \{0, \cdots, n_c-1\}$, set
\begin{eqnarray}\label{eqn:SDE_rec}
    \hat{Y}_{t_{i+1}} = \hat{Y}_{t_{i}} + F\left(t_{i}, Y_{t_i}, PS(W)_{t_{i}, t_{i+1}} \right).
    \end{eqnarray}

\begin{theorem}
Under standard regularity conditions on $\mu$ and $\sigma$, and assuming that $\sigma$ only depends on $t$, then $\hat{Y}_{t_{n_c}}$ approximates $Y_T$ in the strong sense with a local error of order $\mathcal{O}(\Delta t_c^2)$.
\end{theorem}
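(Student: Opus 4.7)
The plan is to apply the stochastic Taylor expansion from \cite{kloeden1992stochastic, baudoin2012stochastic} on each sub-interval $[t_i, t_{i+1}]$, identify the terms captured by $F(t_i, Y_{t_i}, PS(W)_{t_i, t_{i+1}})$, and bound the residual iterated integrals in $L^2$. First I would expand the exact solution $Y_{t_{i+1}}$ around $Y_{t_i}$ to strong order $3/2$. Since $\sigma$ depends only on $t$, the Stratonovich and It\^o formulations of \eqref{eqn:SDE} coincide, so the Taylor operators $L^0 = \partial_t + \mu\,\partial_y + \tfrac{1}{2}\sigma_t^2\,\partial_y^2$ and $L^1 = \sigma_t\,\partial_y$ simplify significantly, and the expansion reads
\begin{align*}
Y_{t_{i+1}} = Y_{t_i} &+ \mu(t_i, Y_{t_i})\,\Delta_i + \sigma_{t_i}\,\Delta W_i + \partial_y \mu(t_i, Y_{t_i})\,\sigma_{t_i} \int_{t_i}^{t_{i+1}}\!\!\int_{t_i}^{u}\! dW_{s_1}\,du \\
&+ \partial_t \sigma_{t_i} \int_{t_i}^{t_{i+1}}\!\!\int_{t_i}^{u}\! ds_1\, dW_u + R_i,
\end{align*}
where $\Delta_i := t_{i+1} - t_i$, $\Delta W_i := W_{t_{i+1}} - W_{t_i}$, and $R_i$ collects the higher-order iterated integrals.

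Next I would verify that the four displayed leading terms coincide exactly with $F(t_i, Y_{t_i}, PS(W)_{t_i, t_{i+1}})$. The drift and diffusion increments produce the first two summands of $F$. The $L^1\mu$ contribution yields the $\partial_y \mu(t_i, Y_{t_i})\,\sigma_{t_i}\,s_{12}$ term directly. For the $\partial_t\sigma$ term, stochastic integration by parts rewrites $\int_{t_i}^{t_{i+1}}(u - t_i)\,dW_u$ as $\Delta_i \Delta W_i - \int_{t_i}^{t_{i+1}}(W_u - W_{t_i})\,du$, which matches the $\partial_t \sigma_{t_i}(s_1 s_2 - s_{12})$ summand of $F$. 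The remainder $R_i$ then consists only of iterated Stratonovich integrals whose multi-indices have strong order at least $2$; invoking standard moment bounds for iterated Wiener integrals (e.g.\ Proposition~5.2.3 of \cite{kloeden1992stochastic}), together with the assumed smoothness and boundedness of $\mu$, $\sigma$, and their first few derivatives, yields $\mathbb{E}\bigl[|R_i|^2\bigr]^{1/2} = \mathcal{O}(\Delta_i^2)$, i.e.\ the advertised local strong error.

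The main obstacle will be the careful treatment of the remainder $R_i$. Specifically, I must verify that every multiple integral in $R_i$ genuinely has strong order $\geq 2$, which requires making explicit the hierarchical set whose complement is captured by $F$ and, crucially, using the assumption $\sigma = \sigma_t$ to annihilate the otherwise dangerous $L^1\sigma = \sigma\,\partial_y\sigma$ terms. Without that assumption, such terms would contribute an iterated Brownian integral of strong order $1$ and spoil the $\mathcal{O}(\Delta_i^2)$ estimate. Once these vanishing terms are accounted for, the remaining residuals each carry at least one extra factor of $\Delta_i$ beyond the retained Taylor terms, and a routine application of the It\^o isometry together with Burkholder--Davis--Gundy and Cauchy--Schwarz delivers the required strong local bound. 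Summing the local errors along the coarse partition $D_c$ via the classical Gr\"onwall-type argument for one-step schemes (see \cite{kloeden1992stochastic}, Theorem~10.6.3) then transfers the $\mathcal{O}(\Delta t_c^2)$ local estimate into strong convergence of $\hat Y_{t_{n_c}}$ to $Y_T$.
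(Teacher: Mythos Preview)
Your proposal is correct and follows essentially the same route as the paper: derive the stochastic Taylor expansion on each sub-interval, use integration by parts to rewrite the $\partial_t\sigma$ term in terms of the partial signature, identify the leading terms with $F$, and invoke the standard moment bounds from \cite{kloeden1992stochastic} on the remaining iterated integrals (exploiting $\partial_y\sigma=0$ to kill the order-$1$ It\^o--Taylor terms). The final Gr\"onwall step you sketch actually belongs to the companion global-error theorem rather than the local statement in question, but this is harmless overreach rather than a gap.
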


\begin{theorem}
    Under standard regularity conditions on $\mu$ and $\sigma$, and assuming Brownian motion as the driving noise, the scheme in Eq.~\eqref{eqn:SDE_rec} yields a strong approximation of $Y_T$ with a global error $\mathcal{O}(\Delta t_c)$.
\end{theorem}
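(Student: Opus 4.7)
The plan is to lift the local strong error estimate of order $\mathcal{O}(\Delta t_c^2)$ from the previous theorem into a global strong error estimate of order $\mathcal{O}(\Delta t_c)$ via the standard induction / discrete Gronwall argument for one-step stochastic schemes, following the framework of Kloeden and Platen.

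Let $e_i := Y_{t_i} - \hat{Y}_{t_i}$ denote the strong error at the $i$-th coarse time node, with $e_0 = 0$. Writing the exact solution as
\begin{equation*}
Y_{t_{i+1}} = Y_{t_i} + F\bigl(t_i, Y_{t_i}, PS(W)_{t_i, t_{i+1}}\bigr) + R_i,
\end{equation*}
where $R_i$ denotes the one-step truncation residual, and subtracting the scheme in Eq.~\eqref{eqn:SDE_rec}, I obtain the error recursion
\begin{equation*}
e_{i+1} = e_i + \bigl[F(t_i, Y_{t_i}, PS_i) - F(t_i, \hat{Y}_{t_i}, PS_i)\bigr] + R_i, \qquad PS_i := PS(W)_{t_i, t_{i+1}}.
\end{equation*}

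The next step, which is the heart of the argument, is to decompose the residual as $R_i = M_i + D_i$ into a "martingale-like" component $M_i$ that is mean-zero conditional on $\mathcal{F}_{t_i}$ and a "drift-like" component $D_i = \E[R_i \mid \mathcal{F}_{t_i}]$; this separation is essential because the two components aggregate very differently across the grid. Under the smoothness assumptions on $\mu$ and $\sigma$ together with the hypothesis that $\sigma$ depends only on $t$, the previous theorem together with standard moment bounds on Brownian increments and on iterated integrals yields the two local estimates $|D_i| \lesssim \Delta t_c^2$ and $\E[|M_i|^2 \mid \mathcal{F}_{t_i}]^{1/2} \lesssim \Delta t_c^{3/2}$, where the fluctuation is one half-order worse than the mean, as is characteristic of Milstein-type schemes.

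Next I would take $L^2$-norms in the error recursion, use the global Lipschitz property of $y \mapsto F(t, y, \cdot)$ inherited from the regularity of $\mu, \sigma$ to control the propagation term by $(1 + C\Delta t_c)\|e_i\|_{L^2}$, and bound the residual contributions: the conditional-mean parts sum deterministically to $\mathcal{O}(n_c \cdot \Delta t_c^2) = \mathcal{O}(\Delta t_c)$, while the martingale parts are orthogonal in $L^2$ (thanks to the $\mathcal{F}_{t_i}$-adaptedness of $e_i$) and therefore accumulate in variance only, giving $\bigl(\sum_i \E|M_i|^2\bigr)^{1/2} \lesssim \sqrt{n_c \cdot \Delta t_c^3} = \mathcal{O}(\Delta t_c)$. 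Applying the discrete Gronwall inequality to the resulting estimate for $\|e_i\|_{L^2}$ delivers $\max_i \|e_i\|_{L^2} \leq C(T)\, \Delta t_c$, which at $i = n_c$ is exactly the claimed global strong order one bound.

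The principal technical obstacle, and the reason the global order does not simply inherit the local order $2$, is precisely this martingale/drift separation of $R_i$: one must carefully identify which part of the truncation residual is conditionally centred, and obtain the two distinct local rates (mean $\mathcal{O}(\Delta t_c^2)$ and mean-square $\mathcal{O}(\Delta t_c^{3/2})$) before summing. This hinges on the fact that the partial signature includes the mixed term $\int_s^t\!\int_s^u dW_{s_1}\,du$, which precisely cancels the dominant martingale contribution that Euler--Maruyama would leave uncompensated; once this is in place, the Lipschitz estimates, the moment bounds on the partial signature, and the Gronwall step are routine.
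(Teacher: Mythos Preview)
Your proposal is correct and is precisely the content of the Kloeden--Platen strong convergence theorem that the paper invokes; in fact the paper's own proof consists of a single sentence deferring to \cite{kloeden1992stochastic}, whereas you have sketched the actual local-to-global argument (martingale/drift splitting of the residual, Lipschitz propagation, discrete Gronwall) that underlies that citation. So your route is the same as the paper's, only made explicit.
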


See Appendix \ref{app:taylor} for more details. 

\begin{remark}
This is in contrast to the Euler–Maruyama scheme, which achieves only $\mathcal{O}(\Delta t_c^{0.5})$ strong error. See \citet{kloeden1992stochastic} for a comprehensive treatment of these results. The inclusion of second-order terms from the partial signature allows Sig-DEG to achieve high accuracy even on coarse time grids.
\end{remark}

Moreover, the joint distribution of $(PS(W)_{t_i, t_{i+1}})_{i}$ is explicitly known, as detailed in the following lemma and can be simulated exactly. The proof is deferred to Appendix \ref{sec:lemmaproof}.

\begin{lemma}\label{lemma1}
Given the time partition ${D}_c:=\{t_i\}_{i=0}^{n_c}$, each $PS(W)_{t_{i-1}, t_i}$ is mutually independent across $i$. For $0 \leq s \leq t$, $ (W_t - W_s, \int_s^t \int_s^{u} \mes W_{s_1} \mes u)$ is a Gaussian random variable with mean zero and covariance matrix
\begin{equation}
\Sigma = 
\begin{bmatrix}
 t-s & \frac{(t-s)^2}{2}  \\
\frac{(t-s)^2}{2}  & \frac{(t-s)^3}{3}
\end{bmatrix}.
\end{equation}
\end{lemma}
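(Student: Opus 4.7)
The plan is to reduce the iterated Itô integral to a Wiener integral with a deterministic integrand via stochastic Fubini, after which the lemma reduces to standard Gaussian bookkeeping.

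First, I would apply the (stochastic) Fubini theorem to rewrite
\begin{equation*}
\int_s^t \int_s^u \mes W_{s_1}\, \mes u \;=\; \int_s^t (t - s_1)\, \mes W_{s_1},
\end{equation*}
by exchanging the order of integration over the triangle $\{s \le s_1 \le u \le t\}$. Since the integrand is deterministic and bounded, this interchange is justified by the classical stochastic Fubini theorem. This rewriting turns the second coordinate of $PS(W)_{s,t}$ (modulo the deterministic time entry) into a Wiener integral of a deterministic function.

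Second, I would note that both
\begin{equation*}
W_t - W_s \;=\; \int_s^t 1\, \mes W_{s_1} \qquad \text{and} \qquad \int_s^t (t - s_1)\, \mes W_{s_1}
\end{equation*}
belong to the first Wiener chaos, as Wiener integrals of deterministic $L^2([s,t])$ integrands. Any finite linear combination of such integrals is itself a Wiener integral of a deterministic function, hence centred Gaussian; this immediately yields that the pair is jointly Gaussian with zero mean. The covariance entries then follow from the Itô isometry $\E\bigl[\int_s^t f\, \mes W \cdot \int_s^t g\, \mes W\bigr] = \int_s^t f(u) g(u)\, \mes u$, giving $t-s$, $\int_s^t (t-u)\,\mes u = (t-s)^2/2$, and $\int_s^t (t-u)^2\, \mes u = (t-s)^3/3$, which match the claimed matrix $\Sigma$.

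Third, for the independence across $i$, I would observe that both non-deterministic components of $PS(W)_{t_{i-1}, t_i}$ are measurable with respect to $\sigma\bigl(W_u - W_{t_{i-1}} : u \in [t_{i-1}, t_i]\bigr)$ (this is immediate for the increment; for the iterated integral it follows either from the Fubini rewriting above or from $\int_{t_{i-1}}^{t_i}\int_{t_{i-1}}^{u} \mes W_{s_1}\, \mes u = \int_{t_{i-1}}^{t_i} (W_u - W_{t_{i-1}})\, \mes u$). Since the intervals $[t_{i-1}, t_i]$ are disjoint, the independent-increments property of Brownian motion then yields mutual independence of the $PS(W)_{t_{i-1}, t_i}$.

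There is no real obstacle here: the only mildly delicate point is the stochastic Fubini step, which is standard for bounded deterministic integrands, and after that the proof is a one-line application of Itô isometry together with independent increments.
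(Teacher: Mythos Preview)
Your argument is correct, but it is organised differently from the paper's. The paper writes $\int_s^t\int_s^u \mes W_{s_1}\,\mes u = \int_s^t (W_u - W_s)\,\mes u$, reduces to $s=0$ by translation, and then computes the second moments directly via ordinary Fubini and the covariance identity $\E[W_{u_1} W_{u_2}] = \min(u_1,u_2)$, obtaining $\int_0^t\int_0^t \min(u_1,u_2)\,\mes u_1\,\mes u_2 = t^3/3$ and $\int_0^t \min(u,t)\,\mes u = t^2/2$. You instead pass to the Wiener--chaos representation $\int_s^t (t-s_1)\,\mes W_{s_1}$ via stochastic Fubini and read off the covariances from the It\^o isometry. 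Your route has the advantage that joint Gaussianity is automatic from membership in the first chaos, whereas the paper's proof computes the moments but leaves the Gaussianity of the pair implicit; conversely, the paper avoids any appeal to a stochastic Fubini theorem and stays entirely within deterministic integration against a Gaussian process. Both computations are of the same length and difficulty. You also handle the mutual independence across $i$ explicitly through measurability with respect to disjoint increment $\sigma$-algebras, which the paper's proof does not spell out at all.
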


\section{Sig-DEG Model for Diffusion Model Distillation}\label{sec:sig-deg}

For efficient model distillation, we focus on fitting the backward process of the pre-trained diffusion model at a coarse time partition. To achieve this, we propose a signature-based differential equation generator (\textbf{Sig-DEG}) to approximate this backward process at the coarse resolution, reformulating the model fitting problem as a supervised learning task by using fine-resolution noise information, to enable faster inference without relying on fine-resolution noise during data generation.

Throughout this section, let $D_f  = \{ t_0, t_1, \dots, t_{N_f-1} \}$ and $D_c = \{ s_0, s_1, \dots, s_{N_c-1} \}$ denote the fine-resolution partition and coarse-resolution partitions, with cardinalities $N_f > N_c$, respectively. We assume, without loss of generality, that the time partitions are equally spaced and that $D_c \subset D_f$. Let $W$ denote the $d$-dimensional Brownian motion defined on $[0, T]$.

\subsection{Sig-DEG Model}
Motivated by the high-order numerical approximation scheme of SDEs in Section \ref{subsec: SDE_approx}, we propose the generator of Sig-DEG, which parameterizes the vector field in Eqn. \eqref{eqn:SDE_rec} by neural networks. 

\begin{definition}[Sig-DEG backward model]
Let $q \in \N$ and $W$ denote Brownian motion. A $q$-step Sig-DEG, or in short $q$-Sig-DEG, denoted by $G^{q}_{\theta}(\cdot ; W): \mathcal{P}(\mathbb{R}^d) \times {D}_{c} \rightarrow \mathcal{P}(\mathbb{R}^{d \times q}): (Z_q, t=t_j) \mapsto (Z_{q-1}, \ldots, Z_0)$, where $(Z_{q-1}, \ldots, Z_0)$ is defined recursively by
\begin{equation} \label{eqn:rec_unit}
Z_{i-1} = Z_{i} + N_{\theta}\left( Z_{i}, t_{j-i}, PS(W)_{ t_{i},t_{i-1}} \right),
\end{equation}
for each $i \in \{1, \ldots, q\}$, where $N_{\theta}:  \mathbb{R}^d  \times \mathbb{R}^{+} \times \mathbb{R}^d \rightarrow \mathbb{R}^d$ is a neural network, fully parametrized by $\theta$. Here $Z_{i}$ and $ PS(W)_{ t_{i},t_{i-1}}$ are independent.
\end{definition}

In the Sig-DEG backward generator,  $Z_q$ serves as the terminal condition and $t=t_j$ denotes the current time. By applying Eqn. \eqref{eqn:rec_unit} recursively, the Sig-DEG produces the first $q$-step time series at time $t_{j-1}, \dots, t_{j-q}$ backwards. 

Sig-DEG enjoys the universality property, allowing it to approximate solutions of a broad class of SDEs, including those underlying pre-trained diffusion models. It thus serves as a principled and effective model for approximating the backward process on a coarse time grid ${D}_c$. 
More precisely, when we set $q=N_c$ and $Z_q \sim p_{\text{noise}}$, the resulting Sig-DEG outputs the time series of length $N_c$, and enjoys the following universality.

\begin{theorem}[Universality of Sig-DEG, informal]\label{thm: uni_sig_DEG}
Suppose that $Y$ denotes the solution to the backward SDE of Eqn. \eqref{eqn:SDE} with the vector field $(\mu, \sigma)$ and the terminal condition $Y_T \sim p_{\text{noise}}$. Assume that $(\mu, \sigma, Y_T)$ satisfies some regularity conditions. Fix any compact set $K \subset S(V_p([0, T], \mathbb{R}^d)$. For every $\varepsilon >0$, there exists sufficiently small $\Delta t_c>0$, such that
\begin{eqnarray*}
    \sup_{W \in K}||G^{N_c}_{\theta}(Y_T, T; W)- Y_0|| \leq \varepsilon.
\end{eqnarray*}
\end{theorem}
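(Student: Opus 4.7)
The plan is to combine two approximation results: the high-order numerical approximation theorem from Section~\ref{subsec: SDE_approx}, which controls the discretization error of the exact local scheme $F$, and the universal approximation theorem for neural networks, which lets the learned network $N_\theta$ approximate $F$ arbitrarily well on a compact domain. A Gronwall-type stability argument then propagates the per-step neural network error through the $N_c$-step recursion.

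\textbf{Step 1 (Discretization error).} Apply the global approximation result: define the reference trajectory $\hat{Y}$ recursively by $\hat{Y}_{s_{N_c}}=Y_T$ and $\hat{Y}_{s_{i-1}}=\hat{Y}_{s_i}+F(s_i,\hat{Y}_{s_i},PS(W)_{s_i,s_{i-1}})$. By the high-order global error bound, under regularity of $(\mu,\sigma)$ one has $\sup_{W\in K}\|\hat{Y}_{s_0}-Y_0\|\le C_1\Delta t_c$. Choose $\Delta t_c$ so small that this is at most $\varepsilon/2$. Note also that for fixed small $\Delta t_c$ the trajectory $(\hat{Y}_{s_i})_i$ stays in a fixed compact set $\tilde K\subset\mathbb{R}^d$, uniformly in $W\in K$, because $F$ is continuous, $K$ is compact, and the support of $Y_T$ is assumed compact (or can be truncated under the regularity assumptions).

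\textbf{Step 2 (Neural network approximation of $F$).} On the compact domain $D_c\times\tilde K\times\mathrm{PS}(K)$, where $\mathrm{PS}(K)$ denotes the compact image of $K$ under the partial-signature map, the vector field $F$ is continuous. By the universal approximation theorem, for any $\eta>0$ there exists $\theta$ such that $\sup\|N_\theta(y,t,z)-F(t,y,z)\|\le\eta$ on this compact set. Moreover we may take $N_\theta$ globally Lipschitz with some constant $L$; since $F$ itself is Lipschitz in $y$ on $\tilde K$ (with constant $L_F$), this poses no difficulty.

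\textbf{Step 3 (Error propagation).} Let $Z_i$ denote the Sig-DEG iterates with terminal $Z_{N_c}=Y_T$. Writing $e_i=\|Z_i-\hat{Y}_{s_i}\|$, the recursion yields
\begin{equation*}
e_{i-1}\le e_i+\|N_\theta(Z_i,s_i,PS_i)-F(s_i,\hat{Y}_{s_i},PS_i)\|\le (1+L\Delta t_c\,\text{-factor})\,e_i+\eta,
\end{equation*}
after bounding the difference using Lipschitzness in the state argument and the uniform neural network error $\eta$. A discrete Gronwall inequality gives $e_0\le\eta\,N_c\,e^{L T}$. Choosing $\eta\le\varepsilon/(2N_c e^{LT})$ makes this at most $\varepsilon/2$. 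Combining with Step 1 via the triangle inequality,
\begin{equation*}
\sup_{W\in K}\|G_\theta^{N_c}(Y_T,T;W)-Y_0\|\le\sup_{W\in K}\|Z_0-\hat{Y}_{s_0}\|+\sup_{W\in K}\|\hat{Y}_{s_0}-Y_0\|\le\varepsilon.
\end{equation*}

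\textbf{Main obstacle.} The key technical difficulty is ensuring that the domain on which UAT is applied is genuinely compact and that the iterates $Z_i$ do not escape it. This requires the compactness of $K$ to translate into a compact range for $PS(W)$, plus boundedness of the trajectory; the latter is delicate because the error propagation must be controlled \emph{simultaneously} with keeping $Z_i$ in the neighborhood of $\tilde K$ where the approximation guarantee holds. The standard fix is to first choose $\tilde K$ slightly enlarged, prove by induction that $Z_i$ remains inside provided $\eta$ is small enough, and only then close the Gronwall loop. The regularity hypotheses hidden in the ``informal'' phrasing of the theorem are precisely what is needed to make each of these steps uniform in $W\in K$.
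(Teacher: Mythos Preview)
Your approach matches the paper's: the same triangle-inequality split into a discretization error $\|\hat Y_0-Y_0\|$ and a network-approximation error $\|G_\theta^{N_c}(Y_T,T;W)-\hat Y_0\|$, with UAT handling the latter. Two presentational differences are worth noting. First, for Step~1 the paper does not invoke the strong-error bound of Section~\ref{subsec: SDE_approx} (which is stated in expectation) but instead calls on rough-path pathwise estimates from Friz--Victoir to obtain a bound of the form $\|Y_T-\hat Y_T\|\le C\sum_k\|(\mu,\sigma)\|_{\mathrm{Lip}(\gamma)}^{\lfloor\gamma\rfloor+1}\|\bar{\mathbf W}\|_{p\text{-var};[t_{k-1},t_k]}^{\lfloor\gamma\rfloor+1}$, which is precisely what makes the $\sup_{W\in K}$ uniform; your ``Main obstacle'' paragraph correctly senses this issue but does not name the tool. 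Second, rather than running your explicit Gronwall loop in Step~3, the paper simply cites Lemma~C.2 of the Logsig-RNN paper to obtain $E_1\le C\|F-N_\theta\|_\infty$ directly, so your argument is more self-contained while theirs is more compact.
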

A more precise statement and proof can be found in Appendix \ref{appendix:univerality}.

\subsection{Supervised Learning Module}
Given the pre-trained diffusion model $g_{\phi}$, we construct the dataset comprising input-output pairs $(W_{D_f}, Y_{D_c})$ where $W_{D_f}$ is the discretized Brownian motion sampled at $D_f$, and $Y_{D_c}$ denotes the backward process downsampled at $t \in {D}_c$, which is generated by $g_\phi$ and driven by the Brownian path $W$. Our objective is to train the Sig-DEG model to learn the functional relationship between these input-output pairs. When the sample-wise loss in the supervised learning module is small, it implies that the Sig-DEG's output at $t_0=0$ closely approximates the distribution of $Y_0 \approx p_{\text{data}}$. 


Recall that Sig-DEG's output is denoted by $G_{\theta}^{n_c}(Y_T, T; W_{D_f})$,  which aims to approximate the backward process evaluated at $D_c$. The global learning objective is given by
\begin{equation} \label{eqn:loss_global}
\mathcal{L}_{\theta} := \mathbb{E}_{p \sim g_{\phi}}[||Y_{D_c}- G_{\theta}^{n_c}(Y_T, T; W_{D_f})||^{2}],
\end{equation}
where $||\cdot||$ is the $l_2$ norm and the expectation is taken over the joint distribution $(W, Y)$, induced by the pretrained diffusion model $g_{\phi}$ (denoted by $\mathbb{E}_{p \sim g_{\phi}}$).

\textbf{Local loss for training acceleration.} Computing the global loss $\mathcal{L}_{\theta}$ (Eqn. \eqref{eqn:loss_global}) over the entire coarse time partition $D_c$ can be computationally intensive. To accelerate the training process, we introduce a local learning objective by uniformly sampling a time point $t \in D_c$ and matching the next $q$ coarse steps per iteration for a small $q$. More specifically, we uniformly sample $j \in \{q, \cdots, n_c\}$ and let $t=t_j$, the local loss is defined by
\begin{equation}\label{eqn:loss_local}
\mathcal{L}_{\theta}^{\mathrm{local}}(t_j) := \mathbb{E}_{p \sim g_{\phi}}[||Y_{t_{j-q}:t_{j-1}} - G_{\theta}^{q}(Y_{t_j}, t_j, W)||^{2}].
\end{equation}
The expected value of the local loss (over the uniform distribution on time $t$) is given by
\begin{equation}
\E_{t \sim \mathcal{U}(D_c)}\left[ \mathcal{L}_{\theta}^{\mathrm{local}}(t) \right] \approx \mathcal{L}_{\theta}.
\end{equation}

We present the pseudo-code in Algorithm~\ref{alg:Sig-DEG} to clearly detail the training process. The coarse step $q$ is set to 1 by default in the pseudo-code for simplicity.

\begin{algorithm*}[ht]
\caption{Sig-DEG Supervised Learning Training}
\begin{algorithmic}[1]
    \State \textbf{Initialize:} Teacher model $g_\phi = (\mu, \sigma, f_{\phi} )$, with $\mu$ and $\sigma$ being the drift and diffusion coefficient of the forward process with the perturbation kernel $p(X_t|X_0)$ determined by $\mu$ and $\sigma$, and $f_{\phi}$ (the score net), $G_\theta$ the Sig-DEG Generator, and $\eta>0$ the learning rate. ${D}_{c}=(t_i)_{i=1}^{N_c}$ -- coarse time partition. ${D}_{f}=(s_j)_{i=1}^{N_c \cdot r}$ -- fine time partition, where $r \in \mathbb{N}$. 
    \Repeat
        \State $x_0 \sim p_{\text{data}}$, $ j\sim \mathcal{U}(\{1, \dots, N_c\})$, $\boldsymbol{\varepsilon} \sim \mathcal{N}(0, \mathbf{I})$. 
        \State $s\gets t_{j-1}$ and $t \gets t_j$. 
        \State Generate $Y_{t}$ using $Y_{t}=\mu(t)X_0+\sigma_t \boldsymbol{\varepsilon}$.  \Comment{Forward SDE: $Y_{t}\sim p(Y_t|X_0 = x_0)$}
        \State $\mathcal{W} \gets \mathbf{0} \in \mathbb{R}^{2d+1}$
        \For{$l \in [j\cdot r:  (j-1 )\cdot r]$}
            \State Sample noise: $\boldsymbol{\zeta}_l \sim \mathcal{N}(0, \mathbf{I})$
            \State Construct the Brownian motion increment: $W_{s_{l-1}, s_{l}} \gets \boldsymbol{\zeta}_l \cdot \sqrt{\Delta_f}$, where $\Delta_f = \frac{1}{N_c \cdot r}$.
            \State Update $Y_{s_{l-1}} \gets g_{\phi}(Y_{s_{l}}, W_{s_l, s_{l-1}})$
            \State Update PS term: $\text{PS}(\mathcal{W})_{s,t} \gets \text{PS}(\mathcal{W})_{s,t}+\left(W_{s_l, s_{l-1}}, s_{l}-s_{l-1}, W_{s, s_l}(s_l-s_{l-1})\right)$ \Comment{Compute the PS of $W$ by the recurrence shown in Eqn. \ref{eqn:ps_recursion}}.
        \EndFor

        \State Generate prediction: $\hat{Y}_{s} \gets Y_{t} + G_\theta\left(Y_{t}, t , \text{PS}(\mathcal{W})_{s,t}\right)$
        \State Compute loss: $\mathcal{L}(\theta) \gets \text{MSE}(\hat{Y}_{t_{l-1}},\; Y_{t_{l-1}})$
        \State Update Sig-DEG generator: $\theta \gets \theta - \eta \nabla_\theta \mathcal{L}(\theta)$.
    \Until{convergence}
\end{algorithmic}
\label{alg:Sig-DEG}
\end{algorithm*}

\subsection{Inference with Exact Simulation}
Once the Sig-DEG model is trained, inference is performed by simulating the estimated backward process over a coarse time partition. More specifically, we start with generating $Z_{N_c} \sim p_{\text{noise}}$. Then for each $i \in \{N_c-1, \dots, 1\}$, we directly simulate $PS(W_{{t_i},t_{i+1} })$ using Lemma \ref{lemma1}, and apply the recurrence Eqn. \eqref{eqn:rec_unit} to compute $Z_{t_{i-1}}$ from $Z_{t_{i}}$, ultimately yielding $Z_0 \approx p_{\text{data}}$. Importantly, note that the simulation of $PS(W_{{t_i},t_{i+1} })$ is exact and does not require simulating the Brownian motion at fine resolution. As a result, it avoids any time discretization error and the Sig-DEG’s inference time is independent of the fine time resolution, scaling linearly with the coarse time dimension $N_c$. Since the time dimension of Sig-DEG, $N_c$, is typically much smaller than that of the pre-trained diffusion model, $N_f$, the proposed model results in significant acceleration of inference time while maintaining high-quality generation. 


\section{Numerical Experiments}\label{sec:experiments}
To evaluate the performance and generalizability of the proposed Sig-DEG model, we perform experiments on datasets that span synthetic distributions, images, and financial time series. Specifically, we evaluate the model on the following benchmarks of various modalities: (1) one-dimensional Gaussian mixture distributions (\textbf{1DN}), (2) image dataset of handwritten digits (\textbf{MNIST)}, and (3) rough volatility time series (\textbf{rBerg}). 

For each dataset, we compare the generative performance of Sig-DEG with the baseline diffusion models used for training the distilled model. All experimental configurations, including dataset specifications, hyperparameter settings, and detailed metric definitions, are documented in Appendix~\ref{sec::comput_details}. Unless otherwise specified, Sig-DEG refers to the 1-step Sig-DEG model, with $q = 1$.

\subsection{Few-Step Data Generation}
\subsubsection{One-Dimensional Gaussian Mixture Experiment}
To evaluate the effectiveness of the proposed methodology, we perform an experiment on a synthetic one-dimensional Gaussian mixture dataset \textbf{1DN}. Figure \ref{mnist_example} illustrates the backward trajectories generated by the proposed Sig-DEG method compared with those of the teacher model. Using only $5$ sampling steps, Sig-DEG closely reproduces the $300$-step teacher trajectories, achieving a $7\times$ reduction in inference time without compromising sample quality (see Table~\ref{tab:1dtoy}). These findings suggest that Sig-DEG provides a computationally efficient alternative to conventional sampling techniques without sacrificing sample fidelity.

\begin{figure}[t!]
    \centering
    \begin{subfigure}{0.325\textwidth}
        \includegraphics[width=\linewidth]{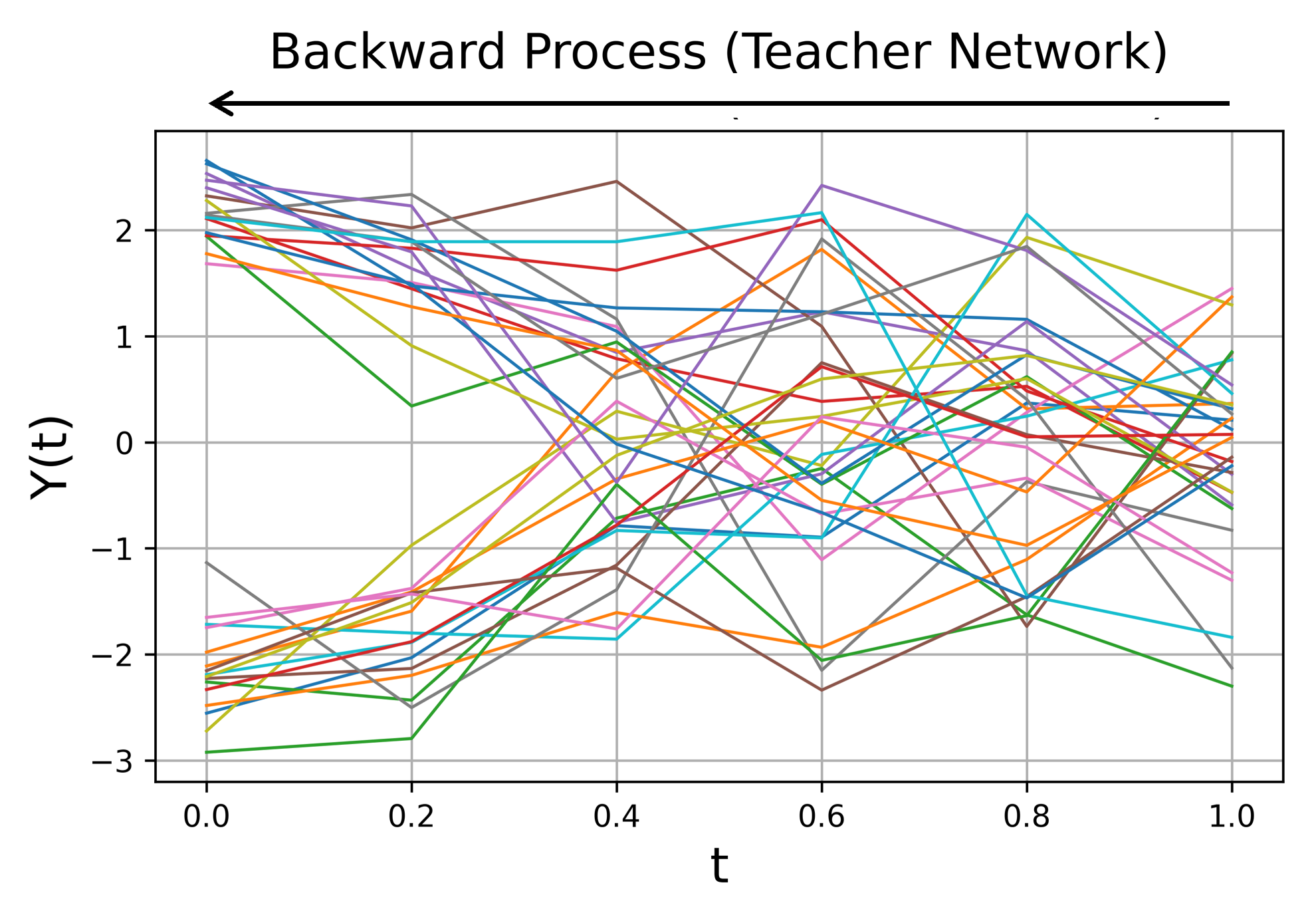}
        \vspace{-15pt}\caption{Teacher}\vspace{-5pt}
    \end{subfigure}
    \begin{subfigure}{0.325\textwidth}
        \includegraphics[width=\linewidth]{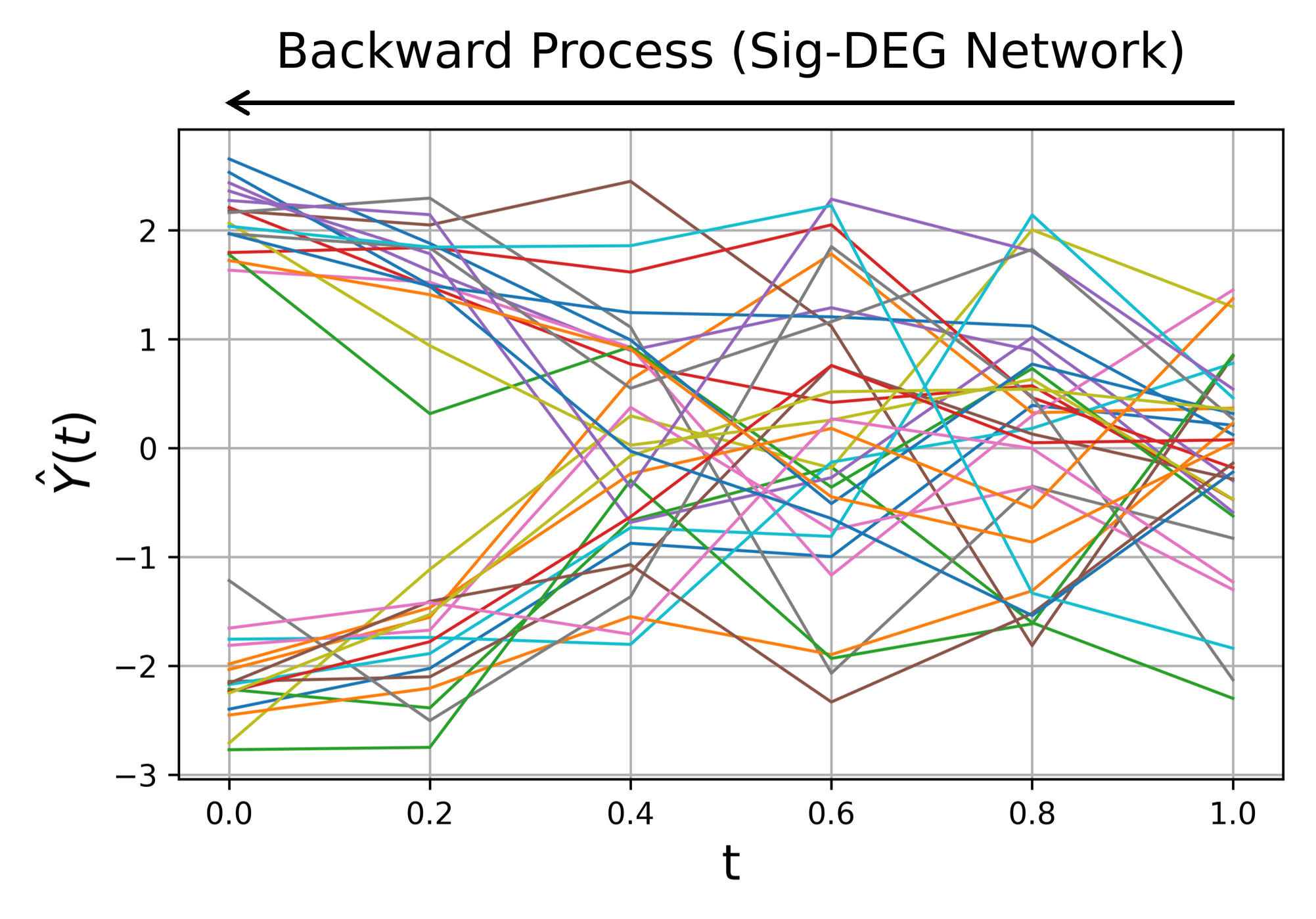}
        \vspace{-15pt}\caption{Sig-DEG}\vspace{-5pt}
    \end{subfigure}
    \medskip
    \begin{subfigure}{0.325\textwidth}
        \includegraphics[width=\linewidth]{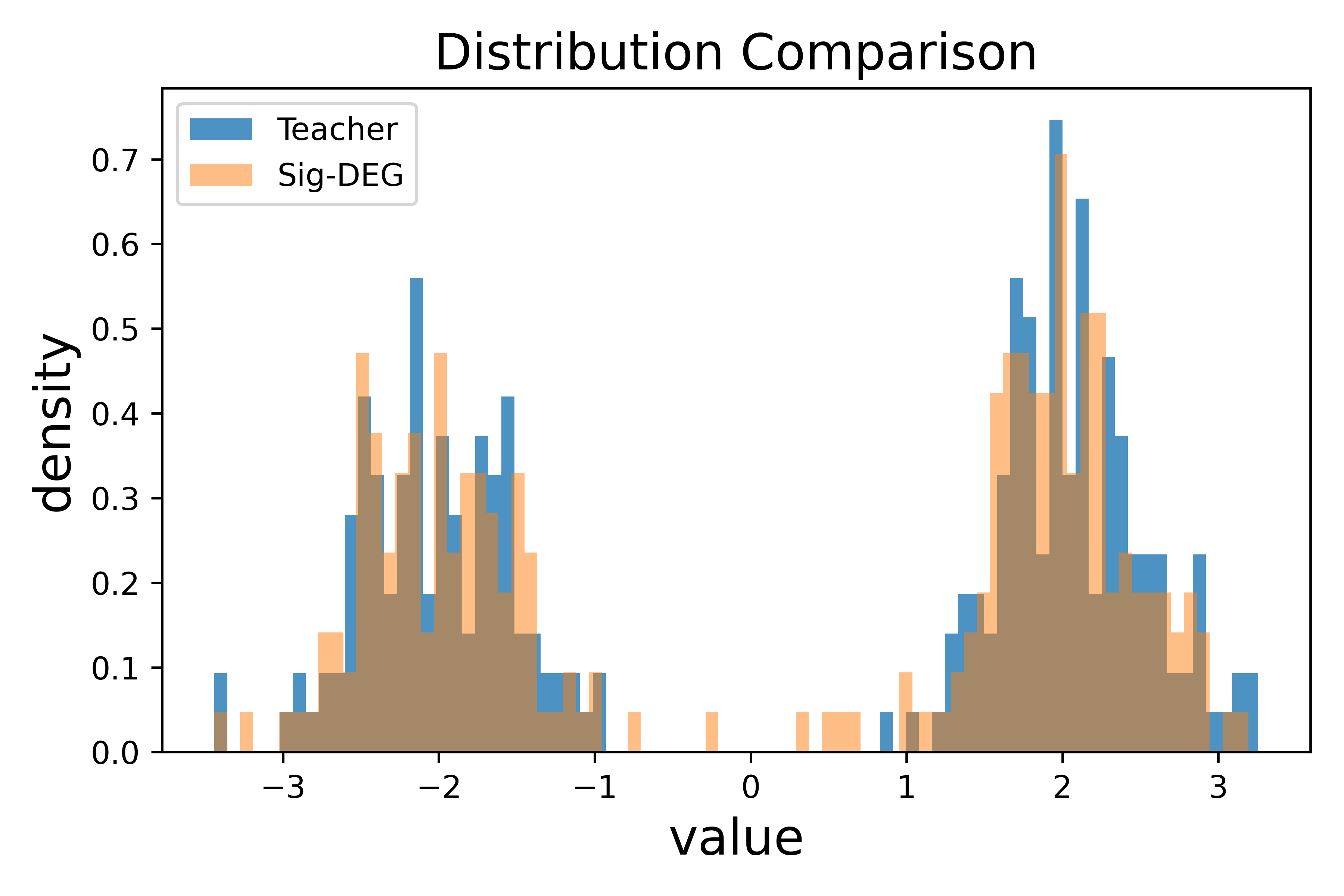}
        \vspace{-15pt}\caption{Distribution}\vspace{-5pt}
    \end{subfigure}
    \caption{\textbf{Few-step generation results on the 1D Gaussian mixture.} (a) 300-step teacher trajectories. The black arrow indicates the direction of the backward process. (b) Sig-DEG with 5 steps. (c) Comparison of final sample distributions. Histograms of the marginal distributions show minimal discrepancy between the models.}
    \label{mnist_example}
\end{figure}
\raggedright

\subsubsection{Handwritten Digit Generation on MNIST}
We next evaluate Sig-DEG on \textbf{MNIST}, a canonical high-dimensional benchmark comprising grayscale images of handwritten digits. Table \ref{tab:mnist} presents results comparing Sig-DEG and the teacher model at 1000 and 1500 diffusion steps.
Sig-DEG reduces inference time by over $40\times$, while preserving or improving sample quality, as measured by Fr\'{e}chet inception distance (FID) and Inception Score (IS). With only 10 sampling steps, Sig-DEG achieves an FID of 4.92 and IS of 9.40, outperforming the 1500-step teacher model in both metrics (Table \ref{tab:mnist}).



\subsubsection{Time Series Generation on Rough Volatility Data}
We next turn to time series generation under rough stochastic volatility, a substantially more challenging domain due to irregularity and latent dependence structures. \textbf{RBerg} consists of two interdependent components: the log-price and log-volatility series. These processes are characterized by low Hölder regularity and intricate cross-dependencies, making it essential to evaluate generated samples not only via distributional losses (e.g., marginal distribution loss and Wasserstein-1), but also through correlation and autocorrelation metrics that assess temporal fidelity.
At 10 sampling steps, Sig-DEG achieves a marginal distribution loss of \pmc{0.0680}{0.0005} and a Wasserstein-1 distance of \pmc{0.0249}{0.0052}, while recovering temporal structure with lower correlation (\pmc{0.0172}{0.00014}) and autocorrelation (\pmc{0.0054}{0.00030}) errors than the teacher model. This yields a 24-fold reduction in inference time (Table \ref{tab::rberg}). Even with a single sampling step, Sig-DEG retains reasonable distributional fidelity, achieving a $267\times$ speed-up with moderate increases in approximation error (Table \ref{tab::rberg}). 

Figures in Appendix \ref{subsec:results} compare the outputs of the teacher model with those of Sig-DEG under a 10-step configuration. The analysis includes both trajectory evolution and the marginal distributions of log-increments. The Sig-DEG trajectories exhibit high fidelity to the reference dynamics, while the marginal distributions' histograms display only minimal deviations, validating the model’s capacity to replicate complex stochastic features inherent in rough volatility time series.

\begin{table}[ht]
    \centering
    \small
    \caption{Performance and speed-up comparison on \textbf{1DN}.}
    \label{tab:1dtoy}
    \renewcommand{\arraystretch}{1.0}
    \begin{tabular}{rcccc}
      \toprule
      Method        & Wasserstein $\downarrow$ & Variance $\downarrow$ & Time (min) & $\times$ Speed \\
      \midrule
      Teacher (300)  & \pmc{0.56}{0.30} & \pmc{0.51}{0.38} & $0.42$ & -- \\
      Sig-DEG (10)   & \pmc{0.64}{0.35} & \pmc{0.58}{0.36} & $0.07$ & $6$ \\
      Sig-DEG (5)    & \pmc{0.57}{0.26} & \pmc{0.62}{0.50} & $0.06$ & $7$ \\
      \bottomrule
    \end{tabular}
\end{table}
\begin{table}[ht]
    \centering
    \small
    \caption{Comparison of performance on the \textbf{MNIST} dataset using 1000 and 1500 sampling steps. Inference times correspond to the complete test set with a batch size of 128.}
    \label{tab:mnist}
    \resizebox{\linewidth}{!}{ 
    \begin{tabular}{rcccccccc}
        \toprule
        \multirow{2}{*}{Method} & \multicolumn{4}{c}{1000 steps} & \multicolumn{4}{c}{1500 steps} \\
        \cmidrule(lr){2-5} \cmidrule(lr){6-9}
         & FID $\downarrow$ & IS $\uparrow$ & Time (min) & $\times$ Speed & FID $\downarrow$ & IS $\uparrow$ & Time (min) & $\times$ Speed \\
        \midrule
        Teacher (Var.)  & $6.70$ & \pmc{9.11}{0.07} & $16.16$ & --  & $5.61$ & \pmc{9.30}{0.08} & $33.23$ & -- \\
        Sig-DEG (10)    & $6.11$ & \pmc{9.16}{0.05} & $0.41$  & $39$  & $4.92$ & \pmc{9.40}{0.05} & $0.40$  & $83$ \\
        Sig-DEG (5)     & $6.64$ & \pmc{9.17}{0.09} & $0.32$  & $51$  & $5.12$ & \pmc{9.39}{0.08} & $0.33$  & $101$ \\
        Sig-DEG (1)     & $10.70$& \pmc{8.96}{0.07} & $0.26$ & $62$  & $8.32$ & \pmc{9.29}{0.06} & $0.26$  & $128$ \\
        \bottomrule
    \end{tabular}
    }
\end{table}

\begin{table}
  \centering
  \small
  \renewcommand{\arraystretch}{1.0} 
  \setlength{\tabcolsep}{3pt} 
  \caption{\textbf{rBerg} dataset. Sig-DEG vs. baseline comparison on marginal distribution histogram loss, Wasserstein-1 distance, correlation, and inference speed (mean ± standard deviation over 10 runs) for a batch of 20k samples.}
  \label{tab::rberg}
  \resizebox{\linewidth}{!}{ 
  \begin{tabular}{rcccccc}
    \toprule
    Method & Marg. Dist. $\downarrow$ & W-1 $\downarrow$ & Corr $\downarrow$ & AutoCorr $\downarrow$ & Time (min) & $\times$ Speed \\
    \midrule
    Teacher (1K steps) & \pmc{0.0597}{4.4\text{e}{-4}} & \pmc{0.0210}{1.1\text{e}{-3}} & \pmc{0.0360}{6.9\text{e}{-3}} & \pmc{0.0139}{2.6\text{e}{-3}} & \pmc{9.93\!\times\!10^{-2}}{9.10\!\times\!10^{-4}} & -- \\
    Sig-DEG (10 steps)   & \pmc{0.0680}{5.0\text{e}{-4}} & \pmc{0.0249}{5.2\text{e}{-3}} & \pmc{0.0172}{1.4\text{e}{-4}} & \pmc{0.0054}{3.0\text{e}{-4}} & \pmc{4.17\!\times\!10^{-3}}{1.95\!\times\!10^{-5}} & $24$ \\
    Sig-DEG (5 steps)    & \pmc{0.0673}{4.1\text{e}{-4}} & \pmc{0.0321}{6.6\text{e}{-4}} & \pmc{0.0171}{2.5\text{e}{-4}} & \pmc{0.0060}{1.5\text{e}{-4}} & \pmc{2.12\!\times\!10^{-3}}{1.78\!\times\!10^{-5}} & $48$ \\
    Sig-DEG (1 step)     & \pmc{0.0923}{3.5\text{e}{-4}} & \pmc{0.0643}{2.6\text{e}{-4}} & \pmc{0.0177}{2.1\text{e}{-4}} & \pmc{0.0078}{2.5\text{e}{-4}} & \pmc{3.72\!\times\!10^{-4}}{1.82\!\times\!10^{-5}} & $267$ \\
    \bottomrule
  \end{tabular}
  }
\end{table}

\subsection{Model Complexity Analysis}
We first evaluate the impact of generator parameter count on \textbf{MNIST} performance (see Tables \ref{tab:mnist_featuresize} and~\ref{tab:mnist_blocksize}). We distill the generator of our sig-DEG (5 sampling steps) from a 1,500-step diffusion teacher. We vary network depth and feature dimensionality. Architecture details are given in Appendix \ref{sec::comput_details}.

\begin{table}[ht]
    \centering
    \footnotesize
    \renewcommand{\arraystretch}{1.0}
    \setlength{\tabcolsep}{3pt}
    \begin{minipage}[t]{0.48\linewidth}
        \centering
        \caption{\textbf{MNIST} dataset feature dimension ratio.}
        \label{tab:mnist_featuresize}
        \begin{tabular}{rcccc}
          \toprule
          Method & \# Params & FID & IS & Time (min) \\
          \midrule
          Full & $19.855$ & $5.12$ & \pmc{9.39}{7.8\text{e}{-2}} & $0.33$ \\
          \midrule
          3/4 & $11.514$ & $4.93$ & \pmc{9.31}{7.6\text{e}{-2}} & $0.33$ \\
          1/2 & $4.980$  & $4.98$ & \pmc{9.35}{7.4\text{e}{-2}} & $0.32$ \\
          $[$1/4, 1/2$]$  & $2.890$  & $5.72$ & \pmc{9.37}{4.0\text{e}{-2}} & $0.33$ \\
          1/4 & $1.253$  & $5.56$ & \pmc{9.27}{7.9\text{e}{-2}} & $0.33$ \\
          \bottomrule
        \end{tabular}
    \end{minipage}
    \hfill
    \begin{minipage}[t]{0.48\linewidth}
        \centering
        \caption{\textbf{MNIST} dataset block number variation.}
        \label{tab:mnist_blocksize}
        \begin{tabular}{rcccc}
          \toprule
          Method & \# Params & FID & IS & Time (min) \\
          \midrule
          Full & $19.855$ & $5.12$ & \pmc{9.39}{7.8\text{e}{-2}} & $0.33$ \\
          \midrule
          11 blocks & $15.126$ & $4.86$ & \pmc{9.32}{9.5\text{e}{-2}} & $0.32$ \\
          10 blocks & $13.550$ & $5.97$ & \pmc{9.20}{6.1\text{e}{-2}} & $0.32$ \\
          8 blocks  & $12.727$ & $5.38$ & \pmc{9.27}{6.9\text{e}{-2}} & $0.31$ \\
          7 blocks  & $11.150$ & $5.05$ & \pmc{9.35}{5.1\text{e}{-2}} & $0.29$ \\
          \bottomrule
        \end{tabular}
    \end{minipage}
\end{table}

\begin{table}[ht]
  \caption{\textbf{rBerg} dataset: Comparison of model complexity and performance across network configurations (mean ± standard deviation over 10 runs). Inference time corresponds to generating 20k samples.}
  \label{tab:rberg_modelsize}
  \centering
  \footnotesize
  \renewcommand{\arraystretch}{0.8}
  
  \resizebox{\linewidth}{!}{
  \begin{tabular}{cccccccc}
    \toprule
    \# Blocks  & Block Size & \# Params [$10^6$] & Marg. Dist. Loss $\downarrow$ & W-1 $\downarrow$ & Corr $\downarrow$ & AutoCorr $\downarrow$ & Time (min) \\
    \midrule
    3  & 256  & 0.808 & \pmc{0.0725}{5.0\text{e}{-4}} & \pmc{0.0304}{6.0\text{e}{-4}} & \pmc{0.0172}{5.0\text{e}{-4}} & \pmc{0.0055}{3.0\text{e}{-4}} & \pmc{2.22\!\times\!10^{-3}}{1.64\!\times\!10^{-4}} \\
    \midrule
    2  & 256  & 0.611 & \pmc{0.0727}{3.0\text{e}{-4}} & \pmc{0.0296}{8.0\text{e}{-4}} & \pmc{0.0171}{2.0\text{e}{-4}} & \pmc{0.0058}{2.0\text{e}{-4}} & \pmc{2.27\!\times\!10^{-3}}{3.69\!\times\!10^{-4}} \\
    1  & 256  & 0.413 & \pmc{0.0713}{3.0\text{e}{-4}} & \pmc{0.0329}{6.0\text{e}{-4}} & \pmc{0.0169}{3.0\text{e}{-4}} & \pmc{0.0060}{3.0\text{e}{-4}} & \pmc{2.19\!\times\!10^{-3}}{2.62\!\times\!10^{-4}} \\
    2  & 128  & 0.174 & \pmc{0.1113}{4.0\text{e}{-4}} & \pmc{0.0699}{7.0\text{e}{-4}} & \pmc{0.0163}{2.0\text{e}{-4}} & \pmc{0.0063}{4.0\text{e}{-4}} & \pmc{2.09\!\times\!10^{-3}}{2.92\!\times\!10^{-4}} \\
    1  & 128  & 0.125 & \pmc{0.1038}{6.0\text{e}{-4}} & \pmc{0.0670}{4.0\text{e}{-4}} & \pmc{0.0163}{2.0\text{e}{-4}} & \pmc{0.0058}{3.0\text{e}{-4}} & \pmc{1.93\!\times\!10^{-3}}{1.85\!\times\!10^{-4}} \\
    \midrule
    3  & 192  & 0.471 & \pmc{0.0753}{6.0\text{e}{-4}} & \pmc{0.0351}{8.0\text{e}{-4}} & \pmc{0.0171}{4.0\text{e}{-4}} & \pmc{0.0060}{5.0\text{e}{-4}} & \pmc{2.15\!\times\!10^{-3}}{2.46\!\times\!10^{-4}} \\
    3  & 128  & 0.224 & \pmc{0.1201}{5.0\text{e}{-4}} & \pmc{0.0733}{7.0\text{e}{-4}} & \pmc{0.0214}{5.0\text{e}{-4}} & \pmc{0.0072}{3.0\text{e}{-4}} & \pmc{2.10\!\times\!10^{-3}}{2.93\!\times\!10^{-4}} \\
    3  & 64   & 0.670 & \pmc{0.1725}{3.0\text{e}{-4}} & \pmc{0.1164}{3.0\text{e}{-4}} & \pmc{0.0262}{1.0\text{e}{-4}} & \pmc{0.0108}{3.0\text{e}{-4}} & \pmc{2.07\!\times\!10^{-3}}{2.50\!\times\!10^{-4}} \\
    3  & 32   & 0.220 & \pmc{0.1726}{5.0\text{e}{-4}} & \pmc{0.1109}{3.0\text{e}{-4}} & \pmc{0.0471}{2.0\text{e}{-4}} & \pmc{0.0178}{2.0\text{e}{-4}} & \pmc{2.05\!\times\!10^{-3}}{2.65\!\times\!10^{-4}} \\
    \bottomrule
  \end{tabular}
  }
\end{table}

The results suggest that the generator is robust to moderate reductions in model size.
From Table \ref{tab:mnist_featuresize} and \ref{tab:mnist_blocksize}, we see that halving the dimensionality of the feature reduces the parameters by 75\% (19.9~M to 5.0~M) with only a 1\% FID increase (4.93 to 4.98) and stable IS (9.31 to 9.35).
Interestingly, the half-dimension variant achieves the best FID (4.86), indicating that moderate compression can improve sample fidelity.
Compressing both depth and width to yield 2.8 M parameters results in only a slight increase in FID and negligible change in IS, underscoring the robustness to aggressive pruning.
These findings highlight the efficiency and scalability of the distilled generator design. 

Overall, both \textbf{MNIST} and \textbf{rBerg} experiments demonstrate that Sig-DEG can be compressed by up to 75$\%$ with minimal performance loss (Tables \ref{tab:mnist_featuresize}, \ref{tab:mnist_blocksize}, and \ref{tab:rberg_modelsize}), enabling further inference-time reductions.

\subsection{Effect of $q$ on Sig-DEG}

\begin{table}[ht]
  \centering
  \small
  \caption{Comparison of Sig-DEG performance on \textbf{MNIST} and \textbf{rBerg} across $q$-step configurations.
  Effect of varying the \textit{q}-step on the performance of the Sig-DEG generator. Results are shown for distilled models with 5 and 10 total sampling steps.}
  \label{tab:sigdeg_q_comparison}
    \resizebox{\linewidth}{!}{
    \begin{tabular}{c c cc cccc}
        \toprule
        \multirow{2}{*}{Steps} & \multirow{2}{*}{$q$} 
              & \multicolumn{2}{c}{MNIST} 
              & \multicolumn{4}{c}{rBerg} \\
              \cmidrule(lr){3-4} \cmidrule(lr){5-8}
              && FID $\downarrow$ & IS $\uparrow$ 
              & Marg. Dist. $\downarrow$ & W-1 $\downarrow$ & Corr $\downarrow$ & AutoCorr $\downarrow$ \\
        \midrule
        $5$ & $1$ & $5.12$ & \pmc{9.39}{7.8\text{e}{-2}}  & \pmc{0.0659}{5.7\text{e}{-4}} & \pmc{0.0279}{5.6\text{e}{-4}} & \pmc{0.0172}{2.4\text{e}{-4}} & \pmc{0.00584}{2.6\text{e}{-4}} \\
        $5$ & $2$ & $5.20$ & \pmc{9.39}{6.5\text{e}{-2}}  & \pmc{0.0721}{4.5\text{e}{-4}} & \pmc{0.0217}{5.2\text{e}{-4}} & \pmc{0.0214}{1.4\text{e}{-4}} & \pmc{0.00614}{6.6\text{e}{-4}} \\
        $5$ & $4$ & $5.23$ & \pmc{9.39}{7.5\text{e}{-2}}  & \pmc{0.1202}{3.8\text{e}{-4}} & \pmc{0.0734}{3.9\text{e}{-4}} & \pmc{0.0186}{2.6\text{e}{-2}} & \pmc{0.00647}{4.4\text{e}{-3}} \\
        \midrule
        $10$ & $1$ & $4.92$ & \pmc{9.40}{5.4\text{e}{-2}}  & \pmc{0.0650}{4.0\text{e}{-4}} & \pmc{0.0240}{3.9\text{e}{-4}} & \pmc{0.0170}{2.8\text{e}{-4}} & \pmc{0.0060}{3.6\text{e}{-4}} \\
        $10$ & $2$ & $4.74$ & \pmc{9.36}{6.4\text{e}{-2}}  & \pmc{0.0638}{4.0\text{e}{-4}} & \pmc{0.0177}{4.0\text{e}{-4}} & \pmc{0.0171}{2.8\text{e}{-4}} & \pmc{0.0051}{3.7\text{e}{-4}} \\
        $10$ & $4$ & $5.28$ & \pmc{9.37}{6.2\text{e}{-2}}  & \pmc{0.0762}{4.0\text{e}{-4}} & \pmc{0.0242}{5.9\text{e}{-3}} & \pmc{0.0176}{3.0\text{e}{-2}} & \pmc{0.0053}{4.3\text{e}{-3}} \\
        \bottomrule
    \end{tabular}
    }
\end{table}

We evaluate how the number of steps $q$, influences the performance of our Sig-DEG student models. For the \textbf{MNIST} dataset, we analyse models distilled from a 1500-step teacher model (Table \ref{tab:sigdeg_q_comparison}). 
For a 5-step generation budget, the lowest FID (5.12) is achieved at $q=1$, with FID rising marginally to 5.20 and 5.23 for $q=2$ and $q=4$, respectively, while the Inception Score remains constant at $\approx9.39$. 
In the 10-step case, $q=2$ yields the best FID, 4.74 versus 4.92 at $q=1$ and 5.28 at $q=4$, again with IS largely unaffected ($\approx9.37$). 
Similar trends are observed on the \textbf{rBerg} dataset. Note that $q>1$ comes at the cost of increased training time. Hence we fix $q=1$ in all of our experiments to balance performance and training efficiency.

\section{Conclusion \& Broader Impact}\label{sec:conclusion}
\textbf{Conclusion.}  
We introduced \textbf{Sig-DEG}, a principled distillation framework that accelerates sampling in diffusion models by orders of magnitude via partial path signatures and a recurrent approximation of the reverse-time SDE. Grounded in stochastic Taylor expansions and rough path theory, Sig-DEG achieves \textbf{5--10 step} generative inference with \textbf{50--100$\times$ speedups} -- all while maintaining or improving fidelity on image and time series benchmarks. Crucially, the architecture is modular, lightweight, and model-agnostic, opening the door to broader applicability beyond traditional diffusion solvers.

This work bridges numerical SDE approximation and generative modeling, offering one of the first end-to-end frameworks that integrates rough path theory into modern neural distillation. Sig-DEG sidesteps handcrafted schedulers or implicit solvers by leveraging exact simulation of signature features, yielding a distillation approach that is both mathematically grounded and empirically effective.

\textbf{Limitations \& Future Work.}  
In future work, we would like to apply Sig-DEG on more challenging and complex tasks, such as high-resolution image generation and conditional image generation based on textual description (e.g., Stable Diffusion). Additionally, while we use fixed-order partial signatures, a promising direction is to learn which features of the signature are most informative -- adapting the truncation, selection, or weighting of PS terms during training. Finally, incorporating higher-order stochastic features (e.g., Lévy area) may further improve performance, and recent advances in cubature on Wiener space~\citep{foster2020numerical} and generative modeling of path functionals~\citep{jelinvcivc2023generative} offer tractable ways forward.

\textbf{Broader Impact.}  
By significantly reducing the computational burden of diffusion-based generation, Sig-DEG enables resource-efficient deployment on edge devices (e.g. mobile phones and wearables) and contributes to more sustainable AI. At the same time, improving generative accessibility raises risks of misuse. We advocate for responsible integration with content watermarking, usage monitoring, and adversarial detection frameworks to mitigate potential harms.

\section*{Acknowledgments and Disclosure of Funding}
HN, LJ, and LY are supported by the EPSRC [grant number EP/S026347/1]. HN is also supported by The Alan Turing Institute under the EPSRC grant EP/N510129/1. LY is also supported by the Hong Kong Innovation and Technology Commission (InnoHK Project CIMDA). NCK is supported by the Engineering and Physical Sciences Research Council [grant numbers EP/T517793/1, EP/W524335/1]. All authors thank the anonymous reviewers for their constructive comments that helped improve the manuscript.

\clearpage

\bibliographystyle{plainnat}

\bibliography{arxiv_2025}
\newpage
\appendix

\section{Preliminaries}\label{sec:path_prelims}
\subsection{Itô–Stratonovich Integral Conversion}\label{sec:ito_strat}

Let $X_t$ be a semimartingale and $ f(t, X_t)$ a sufficiently smooth (e.g., $C^{1,2} $) function. Then the Itô and Stratonovich stochastic integrals are related by the following conversion formula:
\begin{eqnarray*}   
\int_0^T Y_{t-} \circ dX_t = \int_0^T Y_{t-} \cdot dX_t + \frac{1}{2}  [Y, X]^{c}_t,
\end{eqnarray*}
where $\circ$ denotes the Stratonovich integral and $\cdot$ denotes the It\^o integral and $[X, Y]^{c}_t$ is the continuous part of the covariation.
\begin{example}
Let $\sigma: \mathbb{R}^{+} \rightarrow \mathbb{R}$ and $dX_t = \mu(t, X_t)dt+\sigma_t \circ dW_t$. Then $[\sigma_t, W_t]^{c}_t=0$ and hence 
$$dX_t = \mu(t, X_t)dt+\sigma_t \circ dW_t= \mu(t, X_t)dt+\sigma_t \cdot dW_t,$$
where $\mu$ is sufficiently smooth.
\end{example}

\subsection{Stochastic Taylor Approximation}\label{app:taylor}
Let $E=\mathbb{R}^{d}$ and consider the probability space $(C_{0}^{0}([0, T], E), \mathcal{F}, \mathbb{P})$, where $\Omega := C_{0}^{0}([0, T], E)$ is the space of $E$-valued continuous functions defined in $[0, T]$ starting at $0$ (i.e., the Wiener space), $\mathcal{F}$ is the Borel $\sigma$-field and $\mathbb{P}$ is the Wiener measure. By convention, we assume that a path $\omega \in C_{0}^{0}([0, T], E)$ is time-augmented, i.e., $\omega^{0}(t) = t$. We define the coordinate mapping process $W_t^{i}(\omega) = \omega^{i}(t), \forall t \in [0, T], i=1,\dots, d,\: \omega \in \Omega$. Then under the Wiener measure, $W = (W_t^1, \cdots, W_t^d)_{t \in [0, T]}$ is a $d$-dimensional Brownian motion starting at zeros. Moreover, $\bar{W}= (W^{0}_t, W_t)_{t \in [0, T]}$ denotes the time-augmented Brownian motion, where $W^{0}_t = t$.

We refer readers to \citet{kloeden1992stochastic} and \citep{milstein2013numerical} for a comprehensive treatment of stochastic Taylor expansions for general SDEs. In this section, we focus on the SDE of Eqn. \eqref{eqn: SDE_diffusion_model}, as the backward process of a general diffusion model belongs to this class of SDEs, where the volatility term $\sigma$ is a scalar function of time $t$ only.

Let $Y:=(Y_{t})_{ t \in [0, T]}$ denote the solution to the following Stratonovich SDE
\begin{eqnarray}\label{eqn: SDE_diffusion_model}
    dY_{t} = \mu(t, Y_{t}) dt+ \sigma_t \circ dW_t, \quad Y_{0} = y \in E,
\end{eqnarray}
where $\mu \in \mathcal{C}([0, T] \times E, E)$ and $\sigma \in \mathcal{C}([0, T], \mathbb{R})$. The solution could be interpreted in the pathwise sense using rough path theory \citep{lyons2004cubature}.

We next explain the intuition behind the stochastic Taylor approximation.
By integrating both sides of Eqn. \eqref{eqn: SDE_diffusion_model} from $s$ to $t$, it holds that
\begin{eqnarray*}
    &&Y_t -Y_s = \int_{s}^{t} \underbrace{\mu(t_1, Y_{t_1})}_{\mu(s, Y_s)+\int_{s}^{t_1}\partial_t\mu(t_2, Y_{t_2})dt_2 + \int_{s}^{t_1}\partial_y\mu(t_2, Y_{t_2})dY_{t_2}} dt_1+ \int_{s}^{t}\underbrace{\sigma_{t_1}}_{\sigma_s+\int_{s}^{t_1} \partial_t \sigma_{t_2}dt_2}\circ dW_{t_1}\\
    &=&\int_{s}^{t}\mu(s,Y_s)dt_1+\int_{s}^{t}\sigma_s dW_{t_1}  + \int_{s}^{t}\int_{s}^{t_1} \partial_t \sigma_{t_2}dt_2 dW_{t_1} +\int_{s}^t\int_{s}^{t_1}\partial_y\mu(t_2, Y_{t_2})\sigma_{t_2} dW_{t_2} dt_1\\
    && +\int_{s}^t\int_{s}^{t_1}\partial_y\mu(t_2, Y_{t_2})\mu(t_2, Y_{t_2}) dt_2 dt_1 
    + \int_{s}^t\int_{s}^{t_1}\partial_t\mu(t_2, Y_{t_2})dt_2dt_1.
\end{eqnarray*}

Note that $\sigma$ is a function only depending on $t$ and derivatives of $\sigma$ with respect to $y$ vanish. 

Assume that $\mu$ has bounded derivatives of all orders and $\sigma$ has bounded derivatives of at least order 3. Then $Y$ admits the following stochastic Taylor expansion:
\begin{eqnarray}\label{eqn:local_epansion_dev}
 &   Y_t = Y_s + \mu(s, Y_s) (t-s) + \sigma_s(W_t-W_s) +  \partial_t \sigma_s \int_{s}^{t} \int_{s}^{t_1} dt_2dW_{t_1} \nonumber \\ 
 &+\partial_y\mu(s, Y_{s})\sigma_{s} \int_{s}^t\int_{s}^{t_1}dW_{t_2} dt_1+ R(t-s).
\end{eqnarray}
Note that by integration by parts, $\int_{s}^t \int_{s}^{t_1} dt_2dW_{t_1}$ can be rewritten as 
\begin{eqnarray}
\int_{s}^t (t_1-s) dW_{t_1}  =&  (t_1-s)W_{t_1}|_{t_1=s}^{t}-  \int_{s}^t W_{t_1} dt_1=(t-s)W_t - \int_{s}^t W_{t_1} dt_1 \\
=& (t-s)(W_t-W_s) - \int_{s}^t (W_{t_1}-W_s) dt_1. 
\end{eqnarray}
Therefore, the dominant term in Eqn. \eqref{eqn:local_epansion_dev} is a function of $Y_s, \, s$, and $PS(W)_{s, t}$, where $PS(W)_{s, t} = (t-s, W_t-W_s, \int_s^{t}\left(W_{t_1}-W_s\right)dt_1)$.
Therefore, this yields the local expansion:
\begin{eqnarray}
    Y_t - Y_s = F(s, Y_s, PS(W_{s,t}))+R(t-s),
\end{eqnarray}
where the function $F: \mathbb{R}^{+} \times E \times \mathbb{R}^{+} \times E \times E \rightarrow E$ is defined as
\begin{eqnarray}\label{eqn: F_def}
F(s, y, s_1, s_2, s_{21}) =   \mu(s,y)s_1+\sigma_s s_2+\partial_t\sigma_s (s_1 s_2-s_{21})  + \partial_y \mu(s,y) \sigma_s s_{21}.
\end{eqnarray}
The remainder term $R(t-s)\equiv Y_t-Y_s-F(s, Y_s, PS(W_{s, t}))$ has a (local) error of strong order $\mathcal{O}((t-s)^2)$.

\textbf{Numerical approximation of SDEs}.
For simplicity, consider the equally spaced time partition ${D}_{c}:=\{t_0, \cdots, t_{N_c}\}$ of the time interval $[0, T]$, where $\Delta_c = \frac{T}{N_c}$ and $t_j = j\cdot \Delta_c$ for $j \in \{1, \cdots, N_c\}$.  
Recall that we adopt a higher-order numerical scheme to approximate the solution $Y_T$ as follows. First set $\hat{Y}_{t_0} = Y_0$ and for $i \in \{0, \cdots, n_c-1\}$, set
\begin{eqnarray}
    \hat{Y}_{t_{i+1}} = \hat{Y}_{t_{i}} + F\left(t_{i}, Y_{t_i}, PS(W)_{t_{i}, t_{i+1}} \right),
    \end{eqnarray}
We use $\hat{Y}_{T}$ to approximate the solution $Y_T$ to the SDE of Eqn. \eqref{eqn: SDE_diffusion_model}.   

\begin{theorem}[Numerical approximation theorem]\label{Thm: SDE_approx}
Assume that $\mu$ has bounded derivatives of all orders and $\sigma$ has bounded derivatives of at least order 3. Let $Y$ and $\hat{Y}$ be defined as before. It holds that
\begin{eqnarray}
 \mathbb{E} [||\hat{Y}_{T} - Y_T||] \leq  C \Delta_c,
\end{eqnarray}
where $C$ is a constant only depending on $\mu, \sigma$.
\end{theorem}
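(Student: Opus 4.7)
I would follow the classical two-step strategy for proving strong convergence of a one-step SDE scheme: first bound the per-step (local) error produced by truncating the stochastic Taylor expansion, then propagate it to a global bound via a discrete Gr\"onwall argument on the error recursion. Since the excerpt already derives the local expansion
\begin{equation*}
Y_{t} - Y_{s} = F(s, Y_s, PS(W)_{s,t}) + R(t-s),
\end{equation*}
with $F$ as in \eqref{eqn: F_def}, most of the work reduces to controlling the moments of $R$ and the Lipschitz behaviour of $F$ in its $y$-argument.

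\textbf{Local error.} Continuing the derivation in Appendix~\ref{app:taylor} one level further, the remainder $R(\Delta)$ is a finite sum of triple iterated integrals whose integrands are bounded combinations of $\mu$, $\sigma$, their derivatives, and $Y_\cdot$. Under the stated boundedness assumptions, each such integral of the shape $\int\!\!\int\!\!\int dZ^{(1)}\, dZ^{(2)}\, dZ^{(3)}$, with each $dZ^{(j)}$ being either $dt$ or $dW$, is bounded in $L^2$ via Burkholder--Davis--Gundy and Cauchy--Schwarz by $C\,\Delta^{k_1 + k_2/2}$, where $k_1$ (resp.\ $k_2$) counts the time (resp.\ Brownian) integrations. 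Because every remaining monomial satisfies $k_1 + k_2/2 \geq 3/2$, I obtain $\mathbb{E}[|R(\Delta)|^2]^{1/2} \leq C_{\mathrm{loc}}\, \Delta^{3/2}$.

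\textbf{Global error via discrete Gr\"onwall.} Let $e_i := \hat{Y}_{t_i} - Y_{t_i}$ and $\xi_i := PS(W)_{t_i, t_{i+1}}$. Subtracting the exact Taylor expansion from the scheme's recursion gives
\begin{equation*}
e_{i+1} = e_i + \bigl[F(t_i, \hat{Y}_{t_i}, \xi_i) - F(t_i, Y_{t_i}, \xi_i)\bigr] - R_i.
\end{equation*}
Smoothness of $\mu$ and boundedness of $\sigma, \partial_t \sigma$ give a Lipschitz estimate of the form $|F(t_i, y_1, \xi_i) - F(t_i, y_2, \xi_i)| \leq L\bigl(\Delta_c + |\xi_i^{(3)}|\bigr)\, |y_1 - y_2|$, where $\xi_i^{(3)}$ is the iterated-integral component of $\xi_i$. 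Taking $\mathbb{E}[|\cdot|^2]$, splitting the bracket into its $\mathcal{F}_{t_i}$-conditional mean (of order $\Delta_c$) and its centered, Brownian-driven remainder, and using the independence of $\xi_i$ from $\mathcal{F}_{t_i}$ together with the moment formulas of Lemma~\ref{lemma1}, I expect a recursion of the form
\begin{equation*}
\mathbb{E}[|e_{i+1}|^2] \leq (1 + C_1 \Delta_c)\, \mathbb{E}[|e_i|^2] + C_2\, \Delta_c^{3}.
\end{equation*}
Discrete Gr\"onwall then yields $\mathbb{E}[|e_{N_c}|^2] \leq C\, T\, e^{CT}\, \Delta_c^{2}$, and Jensen's inequality converts this to $\mathbb{E}[|e_{N_c}|] \leq C\, \Delta_c$, as required.

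\textbf{Main obstacle.} The subtle point is preserving the strong order one: naively summing $N_c = T/\Delta_c$ per-step $L^2$ errors of size $\Delta_c^{3/2}$ via the triangle inequality would yield only $\Delta_c^{1/2}$. I avoid this by working in $L^2$ throughout and exploiting the $\mathcal{F}_{t_i}$-conditional mean-zero property of both the Brownian-driven part of the $F$-increment and the centered part of $R_i$; by orthogonality their squared contributions add, converting a per-step $L^2$ error of $\Delta_c^{3/2}$ into a cumulative $L^2$ error of $\Delta_c$. To close the recursion, one also needs uniform moment bounds on $\hat{Y}_{t_i}$ and $Y_{t_i}$, which follow from standard a priori estimates under the assumed boundedness of the derivatives of $\mu$ and $\sigma$.
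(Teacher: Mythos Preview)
Your proposal is correct and follows the classical mean-square convergence framework (local Taylor remainder estimate combined with a discrete Gr\"onwall/Milstein--Tretyakov propagation), which is precisely the machinery underlying the result the paper invokes. The paper's own proof is a single sentence deferring to the strong error estimates of Kloeden and Platen, so you have in fact supplied considerably more detail than the paper does; your identification of the conditional mean-zero structure of the remainder as the mechanism that upgrades the naive $\Delta_c^{1/2}$ bound to $\Delta_c$ is exactly the content of the cited reference.
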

\begin{proof}
It is a direct result of the strong error estimate from \cite{kloeden1992stochastic}. 
\end{proof}

\subsection{The Signature of a Path}
In this subsection, we introduce the definition of path signatures, with a focus on the Stratonovich signature of Brownian motion. We present key properties that are directly relevant to our proposed method. For a comprehensive treatment of path signatures and their desirable mathematical properties as principled and efficient feature representations for time series, we refer readers to \cite{lyons2014rough, levin2013learning, lyons2025signature}.  
\subsubsection{Path Space}
Let $E= \mathbb{R}^d$, equipped with its canonical basis $\left\{e_{1},\ldots ,e_{d}\right\}$. Consider a continuous path $X: J \rightarrow E$, where $J$ is a compact time interval. We introduce the $p$-variation of the path, which is a commonly used measure to quantify the smoothness of the path.
\begin{definition} [$p$-variation]\label{p_variation}
Let $p\geq 1$ be a real number. Let $X: J \rightarrow E$ be a continuous path. The $p$-variation of $X$ on the interval $J$ is defined by 
\begin{equation}
    \vert\vert X\vert\vert_{p,J}=\left[ \sup_{{D}\subset J}\sum_{j=0}^{r-1}\left\vert X_{t_{j+1}}-X_{t_j}\right\vert^p\right]^{\frac{1}{p}},
\end{equation}
where the supremum is taken over any time partition of $J$, i.e. ${D} = (t_{1}, t_{2}, \cdots, t_{r})$.  
\end{definition}
Let $\Omega_{p}([0, T], \mathbb{R}^{d})$ denote the space of continuous $d$-dimensional paths of finite $p$-variation starting from the origin. We endow $\Omega_{0}([0, T], \mathbb{R}^{d})$ with the $p$-variation metric.
For each $p \geq 1$, the $p$-variation norm of a path $X \in \mathcal{C}_{p}(J, E)$ is denoted by $\vert\vert X \vert\vert_{p-var}$ and defined as follows:
\begin{eqnarray*}
	\vert \vert X \vert\vert_{p-var} = \vert\vert X \vert\vert_{p, J} + \sup_{t \in J} \vert\vert X_{t}\vert\vert.
\end{eqnarray*}

\subsubsection{Tensor Algebra Space and the Signature}
We then introduce the tensor algebra space of $E=\mathbb{R}^{d}$, where the signature of an $E$-valued path takes values. Consider the successive tensor powers $E^{\otimes n}$ of $E$. If one regard the elements $e_{i}$ as letters, then $E^{\otimes n}$ is spanned by words of length $n$ in the letters $\left\{ e_{1},\ldots,e_{d}\right\} $, and can be identified with the space of real homogeneous non-commuting polynomials of degree $n$ in $d$ variables, i.e., $(e_{I}:= e_{i_{1}} \otimes \cdots \otimes e_{i_n})_{I = (i_1, \cdots, i_n) \in \{1, \cdots, d\}^{n}}$. We give the formal definition of the tensor algebra series as follows.
\begin{definition}[Tensor algebra space of $E$]
The space of all formal $E$-tensors series, denoted by $T\left(\left( E\right) \right)$ is defined to be the following space of infinite series:
\begin{eqnarray*}
T((E)) = \left\{\mathbf{a} = (a_0, a_1, \cdots) \Big \vert a_n \in E^{\otimes n}, \forall n \geq 0 \right\}.
\end{eqnarray*}
It is equipped with two operations, an addition and a product defined as follows: $\forall \mathbf{a} = (a_0, a_1, \cdots ), \mathbf{b} = (b_0, b_1, \cdots) \in T((E))$, it holds that
\begin{eqnarray*}
\mathbf{a}+\mathbf{b} &=& (a_0+b_0, a_1+b_1, \cdots );\\
\mathbf{a} \otimes \mathbf{b} &=&  (c_0, c_1, \cdots),
\end{eqnarray*}
where $c_n = \sum_{j=0}^{n} a_j \otimes b_{n-j}$.
\end{definition}
Let $\pi^I: T((E)) \rightarrow \mathbb{R}$ denote a projection map such that 
$$ a=\sum_{n=0}^{\infty}\sum_{|J|=n} a_J e_{j_1} \otimes \cdots \otimes e_{j_n} \mapsto a_I.$$

\begin{definition}[The Signature of a path]
Let $X \in \Omega_{p}([0, T], \mathbb{R}^{d})$ for some $p \geq 1$. Then the signature of the path is defined as $(1, \mathbf{X}_{s, t}^{1}, \cdots, \mathbf{X}_{s, t}^{n}, \cdots )$, where
\begin{eqnarray*}
\mathbf{X}_{s, t}^{n} := \int_{0 \leq t_1 \leq t_2 \leq \cdots \leq t} dX_{t_1} \otimes dX_{t_2} \otimes \cdots \otimes dX_{t_n} \in E^{\otimes n}.
\end{eqnarray*}    
\end{definition}

\begin{remark}
The signature of the path $X$ can be also written as 
\begin{eqnarray}
    S(X)=\sum_{n=0}^{\infty} S^{(i_{1}, i_{2}, \ldots, i_{k})}(X)  (e_{i_1} \otimes e_{i_2} \otimes \cdots \otimes e_{i_k}),  
\end{eqnarray}
where $S^{(i_{1}, i_{2}, \ldots, i_{k})}(X)$ denotes the iterated integral of $x$ indexed by $I$, and is defined as 
\begin{eqnarray*}
S^{(i_{1}, i_{2}, \ldots, i_{k})}(X) = \int_{0<t_{1} <t_{2}<\cdots <t_{k}<T} dX_{t_{1}}^{(i_{1})}  dX_{t_{2}}^{(i_{2})} \cdots  dX_{t_{k}}^{(i_{k})}.  
\end{eqnarray*}
\end{remark}

\begin{example}[Signature of a Brownian motion]
Since Brownian motion has finite $p$-variation for any $p > 2$, there are multiple ways to define its iterated integrals, such as the Itô and Stratonovich integrals \cite{lyons2007differential}. Throughout this paper, we adopt the Stratonovich signature of Brownian motion, as it provides the natural basis for solving Stratonovich differential equations driven by Brownian paths. 
\end{example}

The precise definition of the $p$-geometric rough path can be found in \cite{lyons2007differential}. Roughly speaking, a $p$-geometric rough path is a generalization of the signature of a path. It is defined via the limit of the signature of bounded 1-variation paths under the $p$-variation metric for some $p\geq 1$. Notably, the Stratonovich signature is a geometric $p$-rough path.

\begin{remark}[Connection between partial signature and signature]
Let  $\bar{W}$ denote the time-argumented Brownian motion. Then the signature of $\bar{W}_{[0, t]}$ of degree 2 is given as follows:
\begin{eqnarray*}
(1,  \bar{\mathbf{W}}^1_{[0, t]},  \bar{\mathbf{W}}^2_{[0, t]}),
\end{eqnarray*} 
where $\bar{\mathbf{W}}^1_{[0, t]} = (t, W_t)$ and $ \bar{\mathbf{W}}^2_{[0, t]} =\left (\frac{t^2}{2}, \int_{0}^{t} s \mes W_s, \int_{0}^s W_s  \mes s, \frac{W_{t}^2}{2}\right)$. It is easy to see that the partial signature of $W_{[0, t]}$ is a subset of the signature of $\bar{W}_{[0, t]}$ of degree 2.
\end{remark}

\begin{definition}[Shuffle product]\label{def:shuffle}
Let $I:=(i_1,\ldots,i_k)$ and $J:=(j_1,\ldots,j_m)$ denote two multi-indices, where $k,m\in \mathbb{N}$ and $i_1,\ldots,i_k,j_1,\ldots,j_m\in [d]$. Define a multi-index $R$ by
\begin{equation*}
R:=(r_1,\ldots,r_{k+m})=(i_1,\ldots,i_k,j_1,\ldots,j_m).
\end{equation*}
The shuffle product of $I$ and $J$ is a finite set
\begin{equation*}\label{eqn:shuffle}
    {I \shuffle J}=\{(r_{\sigma(1)},\ldots,r_{\sigma(k+m)})|\sigma\in (k,m)\text{-shuffle}\},
\end{equation*}
where a permutation $\sigma$ of the set $[k+m]$ is called a $(k,m)$-\text{shuffle} if $\sigma(1)<\ldots<\sigma(k)$ and $\sigma(k+1)<\ldots<\sigma(k+m)$.
\end{definition}
\begin{lemma}[Shuffle product property]
Let $S(X)$ denote the geometric rough path. Then for any index $I$ and $J$, it holds that
\begin{eqnarray*}
 \pi^{I}(S(X)) \cdot \pi^{J}(S(X)) =  \pi^{I} \shuffle \pi^{J}(S(X)).
\end{eqnarray*}
\end{lemma}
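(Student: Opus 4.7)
The plan is to first establish the identity for smooth (bounded $1$-variation) paths by a double induction on the combined length $k+m$ of the multi-indices $I=(i_1,\ldots,i_k)$ and $J=(j_1,\ldots,j_m)$, and then extend to general geometric $p$-rough paths by a continuity and density argument. For smooth $X$, each $\pi^{I}(S(X))_{0,t}$ is a classical iterated Riemann--Stieltjes integral satisfying the ODE $d\,\pi^{I}(S(X))_{0,t} = \pi^{I'}(S(X))_{0,t}\,dX^{i_k}_{t}$, where $I' = (i_1,\ldots,i_{k-1})$, so that all standard calculus rules (integration by parts, Fubini) apply rigorously.

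The base case is trivial: when $I$ or $J$ is empty the corresponding signature term equals $1$, and by Definition~\ref{def:shuffle} the shuffle $I\shuffle J$ reduces to the non-empty index, so both sides coincide. For the inductive step, I would apply Leibniz's rule to the product $\pi^{I}(S(X))_{0,t}\cdot \pi^{J}(S(X))_{0,t}$ viewed as functions of $t$, obtaining
\begin{eqnarray*}
\pi^{I}(S(X))_{0,t}\cdot \pi^{J}(S(X))_{0,t} &=& \int_{0}^{t}\pi^{I'}(S(X))_{0,s}\,\pi^{J}(S(X))_{0,s}\,dX^{i_k}_{s} \\
&& {}+ \int_{0}^{t}\pi^{I}(S(X))_{0,s}\,\pi^{J'}(S(X))_{0,s}\,dX^{j_m}_{s}.
\end{eqnarray*}
Applying the inductive hypothesis to the two integrands (whose total index length is $k+m-1$) rewrites each product as a single iterated integral indexed by a shuffle. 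The key combinatorial step is then the elementary recursion
\begin{equation*}
I\shuffle J \;=\; \bigl((I'\shuffle J),\,i_k\bigr)\,\cup\,\bigl((I\shuffle J'),\,j_m\bigr),
\end{equation*}
where the union is over multisets of words and the comma denotes concatenation. Plugging this back into the sum of integrals reconstructs exactly $\pi^{I\shuffle J}(S(X))_{0,t}$, completing the induction.

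To lift the identity from smooth paths to a geometric $p$-rough path, I would use the definition: geometric $p$-rough paths are precisely the closure, in the $p$-variation metric on the truncated tensor algebra, of the canonical lifts of bounded $1$-variation paths. Pick a sequence of smooth paths $X^{(n)}$ whose lifts converge to $S(X)$ in $p$-variation; by continuity of the signature map on such sequences (Lyons' extension theorem), every fixed coordinate $\pi^{I}(S(X^{(n)}))$ converges to $\pi^{I}(S(X))$, and the same holds for $\pi^{J}$ and $\pi^{I\shuffle J}$. Since the shuffle identity holds for each $X^{(n)}$ coordinate-wise and both multiplication in $\mathbb{R}$ and finite linear combinations are continuous, passing to the limit yields the identity for $S(X)$.

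The main obstacle is a conceptual one rather than a technical hurdle: one must correctly set up the combinatorial recursion for the shuffle and verify that the two integration-by-parts terms match it precisely. The analytic extension from smooth to rough paths is comparatively routine once one invokes Lyons' extension theorem, since the shuffle identity is a pointwise algebraic relation among finitely many components of $S(X)$, all of which depend continuously on the underlying rough path in the $p$-variation topology.
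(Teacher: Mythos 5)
The paper does not in fact prove this lemma: it is stated as a standard fact of rough path theory, with the reader implicitly referred to \citep{lyons2007differential}, and the surrounding text only records the consequence actually used, Eqn.~\eqref{eqn:shuffle_BM}, together with the remark that the Stratonovich -- but not It\^o -- signature of Brownian motion enjoys the property. Your proposal supplies the classical proof, and it is correct. The induction on $k+m$ via the Riemann--Stieltjes product rule for continuous bounded-variation paths, combined with the recursion $I\shuffle J=\bigl((I'\shuffle J),i_k\bigr)\cup\bigl((I\shuffle J'),j_m\bigr)$, is exactly the Ree/Chen argument, and the passage to geometric $p$-rough paths by density of lifts of bounded-variation paths in the $p$-variation metric, together with continuity of every fixed signature coordinate via Lyons' extension theorem, is the standard lifting step; indeed, the shuffle identity is precisely the statement that $S(X)$ is group-like, which some authors take as the \emph{definition} of a (weakly) geometric rough path. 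Compared with the paper's citation-only treatment, your argument buys a self-contained proof and, more usefully, makes transparent \emph{why} the paper's remark about It\^o versus Stratonovich holds: the density step is exactly where geometricity enters, and the It\^o lift of Brownian motion is not a $p$-variation limit of smooth-path lifts, so the limiting argument (and the identity) fails for it. Two small points you should make explicit: the shuffle in Definition~\ref{def:shuffle} must be read as a multiset rather than a set -- otherwise, e.g., $(1)\shuffle(1)$ would lose the factor $2$ in $\pi^{(1)}(S(X))\cdot\pi^{(1)}(S(X))=2\,\pi^{(1,1)}(S(X))$ -- which your ``union over multisets'' phrasing correctly anticipates; and in the limit step one should note that convergence of the level-$\lfloor p\rfloor$ data in $p$-variation controls all fixed higher-level coordinates through the (locally Lipschitz) extension map, which is what licenses passing the finite algebraic identity, involving only the finitely many words in $I\shuffle J$, to the limit.
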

Given $S(\bar{W})$ is a geometric rough path where the stochastic integral is understood in the Stratonovich sense, we have that
\begin{eqnarray}\label{eqn:shuffle_BM}
    \underbrace{\pi^{(1, 2)}S(\bar{W})}_{\int_{0}^t sdW_s} + \underbrace{\pi^{(2, 1)}S(\bar{W})}_{\int_{0}^t W_sds}= \underbrace{\pi^{(1)}S(\bar{W})  \cdot \pi^{(2)}S(\bar{W})}_{= tW_t}. 
\end{eqnarray}

Eqn. \eqref{eqn:shuffle_BM} is essential for our proposed method. Note that the Stratonovich signature of Brownian motion satisfies the shuffle product property, whereas the It\^o signature does not. 
\subsection{Proof of Lemma \ref{lemma1} }
\label{sec:lemmaproof}
The proof of Lemma \ref{lemma1} is given as follows.
\begin{proof}
Due to the translation invariance of Brownian motion, it is sufficient to prove that $W_t$ and $M_t:=\int_0^t \int_0^{u} \mes W_{s_1} d u$ have the covariance matrix 
$\Sigma = 
\begin{bmatrix}
 t & \frac{t^2}{2}  \\
 \frac{t^2}{2} & \frac{t^3}{3} 
\end{bmatrix}$.

It is trivial to prove that  $W_t$ and $M_t$ have zero mean. As the variance of $W_t$ is $t$, we only need to compute $(1)$ the variance of $M_t$ and $(2)$ the covariance between $W_t$ and $M_t$.

\begin{enumerate}
    \item[(1)] Expanding $\int_0^t \int_0^{u} dW_{s_1} \mes u$ as $\int_0^t W_{u}  \mes u$, it holds that 
    \begin{equation*}
\Var\left(\int_0^t W_{u} \mes u \right) = \mathbb{E}\left(\int_0^{t} W_{u}  \mes u \right)^2,
\end{equation*}
as $\int W_u \mes u$ has zero expectation. By direct calculation, we have
\begin{eqnarray*}
&&\mathbb{E}\left(\int_0^{t} \int_0^{t}W_{u_1}W_{u_2}  du_1 du_2\right) 
=\int_0^{t} \int_0^{t}\mathbb{E}\left(W_{u_1}W_{u_2} \right) du_1 du_2 
\\
&=& \int_0^{t} \int_0^{t} \min(u_1,u_2)   \mes u_1   \mes  u_2 = \frac{t^3}{3}.
\end{eqnarray*}
\item[(2)] Expanding the covariance between $W_t$ and $M_t$, we have
\begin{equation*}
\Cov(W_t, M_t) = \mathbb{E}[W_t M_t] = \mathbb E \left [ \int_0^t  W_u   W_t  \mes u  \right ] = \int_{0}^{t} \min(u, t) \mes u = \frac{t^2}{2} .
\end{equation*}
\end{enumerate}
Hence, the variance and covariance are computed as required.
\end{proof}


\section{Universality of Sig-DEG model}\label{appendix:univerality}
We state the rigorous version of Theorem \ref{thm: uni_sig_DEG} as follows.
\begin{theorem}[Universality of Sig-DEG]  Let $p_{\text{noise}}$ denote a multivariate Gaussian distribution.
Suppose that $Y$ denotes the solution to the backward SDE of Eqn. \eqref{eqn:SDE} with the vector field $(\mu, \sigma)$ and the terminal condition $Y_T \sim p_{\text{noise}}$. Assume that $(\mu, \sigma) \in \mathcal{C}_{ b}^{\infty}$, which means that $\mu$ and $\sigma$ are infinitely differentiable with all derivatives bounded. Furthermore, assume that $\mu, \sigma$ are $\text{Lip}(\gamma)$ for $\gamma > 2$.\\
Fix any compact set $\mathcal{K} \subset S(\Omega_{p}([0, T], \mathbb{R}^d)$ for $p \in (2, 3)$. For every $\varepsilon >0$, there exists sufficiently large $N_c>0$ and $N_{\theta}$ (the neural network used in the recurrence of $G_{\theta}^{N_c}$), such that
\begin{eqnarray}\label{eqn: error_universilty_SigDEG}
    \sup_{W \in \mathcal{K}}||G^{N_c}_{\theta}(Y_T, T; W)- Y_0|| \leq \varepsilon.
\end{eqnarray}
\end{theorem}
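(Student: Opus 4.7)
The plan is to bound the error in Eqn.~\eqref{eqn: error_universilty_SigDEG} via a three-way triangle inequality. Let $\hat{Y}_{t_i}$ denote the coarse Taylor scheme of Eqn.~\eqref{eqn:SDE_rec} driven by $W$, initialised at $\hat{Y}_{t_{N_c}} = Y_T$, and let $\tilde{Y}_{t_i}$ denote the analogous recurrence but with the \textit{analytic} Taylor vector field $F$ replaced by the neural network $N_\theta$ that defines Sig-DEG. Then for every $W \in \mathcal{K}$,
\begin{equation*}
\|G^{N_c}_\theta(Y_T,T;W) - Y_0\| \;\le\; \underbrace{\|\tilde{Y}_{t_0} - \hat{Y}_{t_0}\|}_{\text{(A) NN propagation}} \;+\; \underbrace{\|\hat{Y}_{t_0} - Y_0\|}_{\text{(B) discretisation}}.
\end{equation*}
The strategy is to pick $N_c$ first so that (B) $\le \varepsilon/2$ uniformly in $W \in \mathcal{K}$, then pick $N_\theta$ so that (A) $\le \varepsilon/2$ uniformly on the compact set that the pair $(\hat{Y}_{t_i}, PS(W)_{t_i,t_{i-1}})$ must explore.

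For term (B), I would invoke Theorem~\ref{Thm: SDE_approx} pathwise. The strong-$L^1$ estimate there is the standard statement, but under the hypothesis $(\mu,\sigma)\in\mathcal{C}_b^\infty$ and $\mu,\sigma\in\mathrm{Lip}(\gamma)$, $\gamma>2$, the Taylor scheme is in fact the order-one Milstein-type truncation of the universal limit map of Lyons. By Lyons' continuity theorem for solutions of RDEs with respect to the driving $p$-rough path, both $Y$ and $\hat{Y}$ are locally uniformly continuous functions of $S(W) \in \mathcal{K}$. Hence the pathwise local-to-global error estimate upgrades to a \emph{uniform} bound of order $\mathcal{O}(\Delta t_c)$ over the compact set $\mathcal{K}$, and one can choose $N_c$ large enough to make (B) $\le\varepsilon/2$ independently of $W$.

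For term (A), I would proceed inductively in the backward time index. Because $\mathcal{K}$ is compact and $\hat{Y}$ depends locally Lipschitz-continuously on $(Y_T, PS(W))$ (the vector field $F$ is smooth and its iterates are smooth compositions thereof), the trajectories $\{\hat{Y}_{t_i}\}$ and the partial-signature inputs $\{PS(W)_{t_i,t_{i-1}}\}$ remain in a compact product set $\mathcal{D}\subset\mathbb{R}^d\times[0,T]\times\mathbb{R}^{2d+1}$. On $\mathcal{D}$, the universal approximation theorem for feedforward neural networks yields an $N_\theta$ with $\sup_{\mathcal{D}}\|N_\theta - F\|\le\eta$ for arbitrarily small $\eta$. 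A discrete Gronwall argument, using the Lipschitz constant $L$ of $F$ in $y$ restricted to $\mathcal{D}$, then gives
\begin{equation*}
\|\tilde{Y}_{t_0}-\hat{Y}_{t_0}\| \;\le\; \eta \sum_{i=0}^{N_c-1}(1+L)^{i} \;\le\; \eta\,\frac{(1+L)^{N_c}-1}{L},
\end{equation*}
so choosing $\eta$ small enough (with $N_c$ now fixed from the previous step) makes (A) $\le\varepsilon/2$.

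\paragraph{Main obstacle.} The delicate point is ensuring the domain $\mathcal{D}$ on which the neural network must approximate $F$ is genuinely compact and \emph{known in advance}. This requires two things: (i) handling the terminal condition $Y_T\sim p_{\text{noise}}$, which is unbounded in general; one remedies this by restricting to a compact high-probability ball for $Y_T$ and absorbing the tail into $\varepsilon$, or by reading the theorem as fixing $Y_T$ in a prescribed compact set alongside $\mathcal{K}$; and (ii) proving an \emph{a priori} bound on the Taylor iterates $\hat{Y}_{t_i}$ that is uniform in $W\in\mathcal{K}$, which follows from the Lipschitz dependence on $(t,y,\text{PS})$ together with the uniform bound $\sup_{W\in\mathcal{K}}\max_i\|PS(W)_{t_i,t_{i-1}}\|<\infty$. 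The Gronwall-type growth constant $(1+L)^{N_c}$ also couples the two error terms: one must commit to $N_c$ before controlling the NN error, and verify that the neural-network approximation budget $\eta$ can compensate the potentially large factor $(1+L)^{N_c}$; smoothness of $F$ under $\mathcal{C}_b^\infty$ keeps $L$ finite on $\mathcal{D}$, so this is only a quantitative, not qualitative, obstruction.
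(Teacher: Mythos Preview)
Your proposal is correct and follows essentially the same route as the paper: a triangle inequality into discretisation error plus neural-network propagation error, the former controlled via rough-path (Friz--Victoir) error estimates uniform on the compact $\mathcal{K}$, and the latter via universal approximation of $F$ by $N_\theta$ combined with a stability bound for the recurrence. The only difference is cosmetic: the paper outsources your discrete-Gronwall step (A) to Lemma~C.2 of \cite{liao2019learning}, whereas you write out the propagation explicitly and are more careful in flagging the compactness of the approximation domain and the unboundedness of $Y_T$---issues the paper's proof does not address.
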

Our proof follows the main steps of proving the universality for the Logsig-RNN model \citep{liao2019learning}, with adaptations tailored to our statement. This proof requires the pathwise error estimates of the solution, which is provided by rough path theory \citep{friz2010multidimensional}. 
\begin{proof}
Let $\hat{Y}$ denote the estimator of the solution to the SDE using the numerical approximation scheme. By the triangle inequality, it holds that
\begin{eqnarray}
|| G^{N_c}_{\theta}(Y_T, T; W)- Y_0   ||\leq \underbrace{|| G^{N_c}_{\theta}(Y_T, T; W)- \hat{Y}_0   ||}_{E_1}+\underbrace{||Y_0 - \hat{Y}_0 ||}_{E_2}.
\end{eqnarray}

Note that $\mathbf{W}$ is a $p$-geometric rough path and $ (\mu, \sigma) $ is a  $\mathrm{Lip}(\gamma)$ vector field with $ \gamma > p $. By \cite{friz2010multidimensional}, there exists a constant $ C = C(p, \gamma) $ such that
\begin{eqnarray}
\left\| Y_T - \hat{Y}_T \right\| \leq C \sum_{k=1}^{N} \|(\mu, \sigma)\|_{\mathrm{Lip}(\gamma)}^{\lfloor \gamma \rfloor + 1} \cdot \|\bar{\mathbf{W}}\|_{p\text{-var}; [t_{k-1}, t_k]}^{\lfloor \gamma \rfloor + 1}.
\end{eqnarray}
It implies that the global error term has the order $(t-s)^{\frac{\lfloor \gamma \rfloor + 1}{p}-1}$. Therefore, there exists a sufficiently large $N_{1}>0$ such that $E_2 < \frac{\varepsilon}{2}$. 

Moreover, by Lemma C.2 in \cite{liao2019learning}, it holds that there exists sufficiently large $N_2>0$ such that
\begin{eqnarray*}
    E_1 \leq C ||F-N_{\theta}||_{\infty},
\end{eqnarray*}
where $C$ is a constant. As the fully connected neural network $N_{\theta}$ has the universality property, there exists a neural network $N_{\theta}$, such that
$ E_1 \leq \frac{\varepsilon}{2}$. Let $N_c:=\max_{i=1, 2}N_i$, the desired error bound (Eqn. \eqref{eqn: error_universilty_SigDEG}) is then achieved.
\end{proof}

\section{Implementation and Computational Details}
\label{sec::comput_details}
\subsection{General notes}
\label{general}
\textbf{Code Availability.} The complete source code, including model implementations, training scripts, and evaluation tools, will be released publicly to facilitate reproducibility and further research.

\textbf{Software Environment.} All numerical experiments were conducted using Python 3.10.12 and PyTorch 2.5.1 with CUDA 12.4 and cuDNN 9.1 for GPU-accelerated computations. Wasserstein distances between empirical and model-generated distributions in the 1D Gaussian mixture and Rough Bergomi datasets were computed using the \texttt{POT} (Python Optimal Transport) library \citep{flamary2021pot}. For image datasets, generative quality was evaluated using the Fréchet Inception Distance (FID) as implemented in the \texttt{pytorch\_fid} package.

\textbf{Hardware Infrastructure.} Experiments were executed on two distinct computational platforms:

\begin{itemize}
    \item \textbf{Multi-GPU Workstation.} This system is composed of 5 NVIDIA Quadro RTX 8000 GPUs (each with 49\,GB VRAM), although only a single GPU was used per experiment. It runs Ubuntu 22.04.5 LTS and is powered by dual Intel Xeon Gold 6238 CPUs (88 threads in total, 2.10\,GHz base frequency), 502\,GB RAM, and SSD storage. 
    \item \textbf{High-Performance H100 Server.} The  1D distributions and image dataset experiments were performed on a high-performance computing node based on the NVIDIA Grace Hopper architecture. This server features NVIDIA H100 GPUs with 96\,GB HBM3 memory and a 72-core Grace CPU.
\end{itemize}

\textbf{Optimization}. All experiments were optimized using the Adam optimizer~\citep{2015-kingma}, with learning rates tailored to each specific task. The choice of learning rate was determined via grid search on a validation set.

As network architectures, hyperparameters, and test metrics depend on the dataset, we defer the relevant details to the corresponding subsections for each dataset.

\subsection{1D Toy Example}
\label{sec:1D_dataset}
\subsubsection{Data and additional visualization}
\textbf{1DN} is a synthetic one-dimensional toy dataset generated from a bimodal distribution, specifically a mixture of two Gaussian components. Both Gaussian components have a standard deviation of 0.5 and are centred at $-2$ and $+2$. We sample 10,000 examples from this Gaussian mixture model, allocating 8,000 for training and 2,000 for test. 100 examples from the training set are selected as the validation set. This dataset has two distinct modes, which is useful for evaluating the performance of generative models and density estimators in a controlled, low-dimensional setting. The left plot of Figure~\ref{1D_illustration} shows examples sampled from this mixture distribution at time $t=0$. During the forward (diffusion) process, noise is gradually added to these samples, transforming them into a standard normal distribution over 300 steps. Conversely, the right plot of Figure~\ref{1D_illustration} illustrates the backward (denoising) process, where the teacher model generates samples starting from the standard normal distribution and progressively reconstructs the original bimodal mixture.

\begin{figure}[ht]
    \centering   \includegraphics[width=\linewidth]{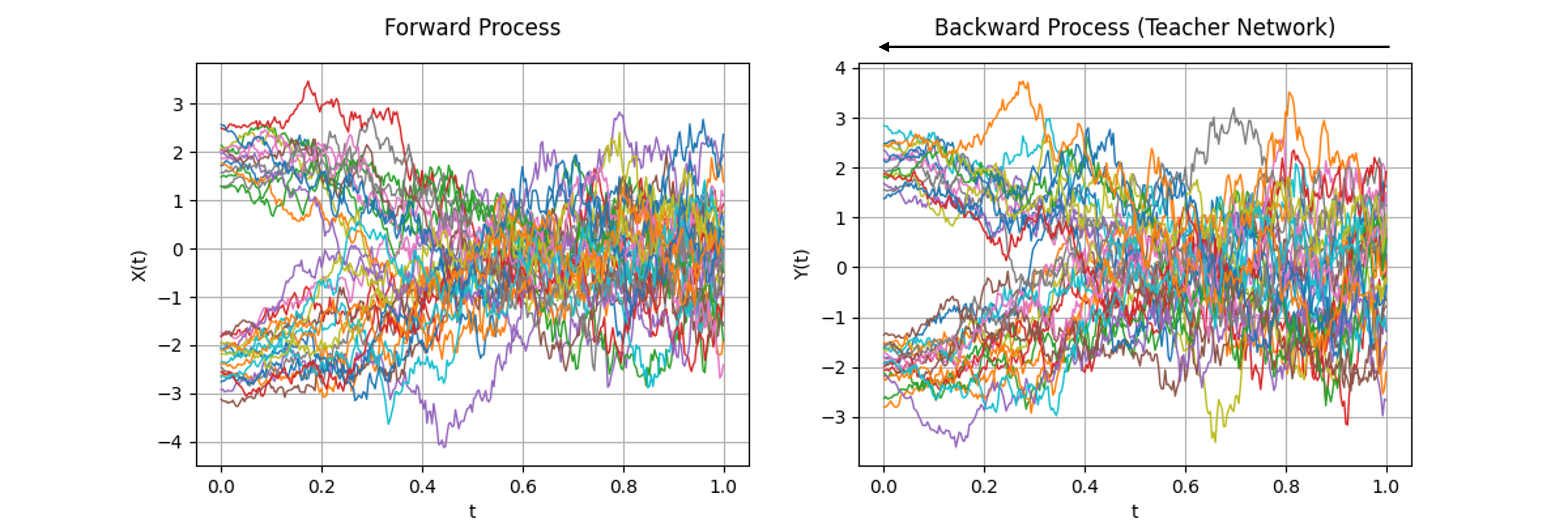}
    \caption{Illustration of the diffusion process on the 1D toy dataset over 300 steps. Left: Samples drawn from the original bimodal mixture of Gaussians at time $t=0$. Right: Samples generated by the backward (denoising) process of the teacher model, starting from a standard normal distribution and recovering the bimodal structure. The black arrow indicates the direction of the backward process.
    \label{1D_illustration}}
\end{figure}
\subsubsection{Test metrics}
For this one-dimensional toy example, we use the following two test metrics to assess the generation quality.
\begin{itemize}
    \item The Wasserstein-1 distance (denoted by $W_1$), also referred to as the Earth Mover's Distance (EMD), was used to quantify the discrepancy between the real and generated one-dimensional distributions.
    \item The variance score is calculated as the absolute difference between the empirical variances of the real and generated one-dimensional distributions.
\end{itemize}
\subsubsection{Network Details}
\label{sec:1d_network}
We define the Residual Architecture, a compact neural network architecture employed in the \textbf{1DN} experiment. It combines a residual multi-layer perceptron with explicit time conditioning to facilitate learning over temporally evolving data distributions.

The model takes two inputs:
\begin{itemize}
    \item A feature vector $x \in \mathbb R^{d_x}$, representing the input data.
    \item A scalar timestep $t \in \mathbb N$, used for diffusion-time conditioning.
\end{itemize}
The model outputs a prediction $\hat{y} \in \mathbb R^{d_{out}}$, typically of the same dimensionality as $x$ (i.e., $d_{out} = d_x$).

The forward computation consists of three main stages: temporal embedding, input encoding, and output generation. 

\paragraph{Temporal Embedding Module.} To incorporate temporal information into the model, a sinusoidal positional encoding is used, yielding an embedding for each discrete time step $t$, $t_{\text{emb}}\in \mathbb R^{128}$ . This embedding is subsequently processed by a multilayer perceptron defined as follows:
\begin{equation}\nonumber
\text{MLP}_{\text{time}} := \text{Linear}(128, 256) \rightarrow \text{SiLU} \rightarrow \text{Linear}(256, 256),
\end{equation}
\begin{equation}\nonumber
    t_{\text{feat}} = \text{MLP}_{\text{time}}(t_{\text{emb}}),
\end{equation}
where the time-conditioned feature vector $t_{\text{feat}} \in \mathbb R^{256}$.

\paragraph{Input Encoding Module.} 
The input vector $x \in \mathbb R^{d_x}$ is encoded via a fully-connected residual neural network, denoted $\text{ResNet\_FC}(d_x, 256, 3)$. This architecture begins with a linear transformation:
\begin{equation}\nonumber
\text{Linear}(d_x, 256),
\end{equation}
and is followed by three residual blocks. Each block consists of two linear layers interleaved with the SiLU activation function and employs a residual connection scaled by $1/\sqrt{2}$ to maintain training stability:
\begin{equation}\nonumber
 h_{i+1} = \frac{1}{\sqrt{2}} \left(h_i + \text{SiLU}(\text{Linear}(\text{SiLU}(\text{Linear}(h_i))))\right).
\end{equation}
This design ensures efficient gradient flow and robust representation learning.
The residual network gives an input feature representation $x_{\text{feat}} \in \mathbb R^{256}$
\begin{equation}\nonumber
    x_{\text{feat}} = \text{ResNet\_FC}(x).
\end{equation}

\paragraph{Output Module.} 
The encoded representations from the temporal and input modules are combined by element-wise addition, followed by a shallow feedforward neural network to produce the final output:
\begin{equation}\nonumber
MLP_{\text{out}} := \text{Linear}(256, 256) \rightarrow \text{SiLU} \rightarrow \text{Linear}(256, d_{\text{out}}),
\end{equation}
\begin{equation}\nonumber
    y = \text{MLP}_{\text{out}}(x_{\text{feat}} + t_{\text{feat}}).
\end{equation}

\subsection{MNIST data}
\label{MNIST}
MNIST~\citep{lecun1998gradient} is a benchmark dataset of handwritten digits, comprising ten classes representing the digits 0 through 9. It contains 60,000 training images and 10,000 test images, each formatted as a 28×28 grayscale image. During the validation phase, we generate 1,000 images and evaluate their quality by computing the Fréchet Inception Distance (FID) against the training set.

\subsubsection{Network Details}
\label{mnist_network}
Following stable diffusion \citep{rombach2022high}, we first obtain a latent encoding of the image using an auto-encoder and operate the diffusion and denoising process on the latent space. The denoising network used for the teacher model is the same U-Net architecture as described in~\cite{rombach2022high}. Specifically, it consists of a symmetric encoder-decoder structure with skip connections, where each level includes multiple ResNet blocks. Self-attention layers are applied at selected resolutions and time embeddings are added to each block to model the diffusion timestep. In the conditional setup, cross-attention layers are also included to enable conditioning on external information such as text embeddings. For our Sig-DEG generator, we introduce an additional input, $PS(W)_{t_i, t_{i-1}}$, which is a Gaussian random variable as described in Lemma~\ref{lemma1}. To maintain simplicity and computational efficiency, we concatenate this input with the noisy latent encoding along the feature dimension and feed the combined representation into the network. As a result, the primary architectural difference between our Sig-DEG generator and the teacher model lies at the input stage, where the Sig-DEG model incorporates the additional $PS(W)_{t_i, t_{i-1}}$ concatenated with the noisy latent encoding before being passed through the network.
\subsubsection{Evaluation Metrics}
For image generation tasks, we report the Fréchet Inception Distance (FID) and the Inception Score (IS) as discriminative performance measures. The FID assesses the similarity between the distributions of generated and real image features as extracted by a pre-trained Inception network, while the IS captures the semantic meaningfulness and diversity of generated images.

\subsubsection{Hyper-Parameters Choice for Model Complexity Analysis}
Full feature dimension: \{down channels: [128, 256, 256, 256], mid channels: [256, 256], time embedding: 256\}

Three quarter feature dimension: \{down channels: [96, 192, 192, 192], mid channels: [192, 192], time embedding: 192\}

Half feature dimension: \{down channels: [64, 128, 128, 128], mid channels: [128, 128], time embedding: 128\}

Between half and one quarter feature dimension: \{down channels: [48, 96, 96, 96], mid channels: [96, 96], time embedding: 96\}

One quarter feature dimension: \{down channels: [32, 64, 64, 64], mid channels: [64, 64], time embedding: 64\}

\subsubsection{Visual Comparison}
We provide the visual comparison for the teacher model (1500 sampling steps) and our proposed Sig-DEG with 5 and 1 sampling steps on the MNIST dataset. As illustrated in Figure~\ref{mnist_visual_com}, even with a single sampling step, our Sig-DEG model is capable of generating clear and recognizable digits, demonstrating the effectiveness of our model. 
\begin{figure}[t!]
    \centering
    \begin{subfigure}{0.30\textwidth}
        \includegraphics[width=\linewidth]{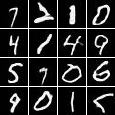}
        \vspace{-15pt}\caption{Teacher 1500-step}\vspace{-5pt}
    \end{subfigure}
    \hspace{0.01\textwidth}
    \begin{subfigure}{0.30\textwidth}
        \includegraphics[width=\linewidth]{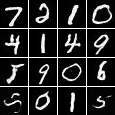}
        \vspace{-15pt}\caption{Sig-DEG 5-step}\vspace{-5pt}
    \end{subfigure}
    \hspace{0.01\textwidth}
    \medskip
    \begin{subfigure}{0.30\textwidth}
        \includegraphics[width=\linewidth]{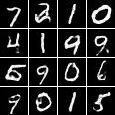}
        \vspace{-15pt}\caption{Sig-DEG 1-step}\vspace{-5pt}
    \end{subfigure}
    \caption{Few-step generation results on the MNIST dataset.}
    \label{mnist_visual_com}
\end{figure}

\subsection{Rough Bergomi Task}
\label{Rough_Bergomi}
\subsubsection{Data}
The rough Bergomi (rBergomi) price process, introduced by \cite{bayer2016pricing}, defines the spot price process as

\begin{align}
    S_t := \exp \left\{ \int_0^t \sqrt{ V_u } \mathrm{d}B_u - \frac{1}{2}\int_0^t V_u \mathrm{d}u \right\},\quad B_u:=\rho W_u^1 + \sqrt{1 - \rho^2}W_u^2, \\
    V_t := \xi\ \exp \left\{ \eta Y^a_t - \frac{\eta^2}{2} t^{2a + 1}\right\}, \quad Y_t^a := \sqrt{2a + 1} \int_0^t (t - u)^a  \mathrm{d}W^1_u,
\end{align}
for Brownian motions $W^1, W^2$. We sample the process with the help of the hybrid scheme from \cite{bennedsen2017hybrid}. 
Our dataset is composed of 100 000 sample paths, among which 60$\%$ is used for training, 20$\%$ for validation, and the remaining 20$\%$  is used for testing. 

We adopt the standard configuration used in the rough Bergomi dataset, with parameters set as follows: $\xi = 0.235^2$, $\eta = 1.9$, $\rho = -0.9$, $a = -0.35$ (i.e., Hurst parameter of $0.15$), $T = 1.0$, and $S_0 = 1.0$. 
\subsubsection{Test metrics}
To evaluate the quality of the generated data, we use several quantitative metrics that assess both marginal distributions and temporal dependencies.

\begin{enumerate}
    \item \textbf{Marginal Distribution Loss.} To evaluate the alignment between the marginal distributions of the generated and reference data, we compute histograms of the marginal probability density functions (PDFs) and measure the average absolute difference between the empirical distributions.

    \item \textbf{Wasserstein-1 Distance ($W_1$).} We compute the average $W_1$ distance between the real and generated time series marginals $X_t^{(i)}$ over all time steps $t$ and feature indices $i$.

    \item \textbf{Correlation Discrepancy.} To capture temporal dependencies and cross-feature relationships, we use a correlation-based metric. Let $X$ and $\hat{X}$ denote the real and generated datasets, respectively. The correlation discrepancy is defined as:
    \begin{equation}
    \mathrm{cor}(X, \hat{X}) = \frac{1}{T d^2} \sum_{s,t} \sum_{i,j=1}^{d} \left\lvert \rho(X_s^{(i)}, X_t^{(j)}) - \rho(\hat{X}_s^{(i)}, \hat{X}_t^{(j)}) \right\rvert,
    \end{equation}
    where $\rho(X, Y)$ denotes the Pearson correlation coefficient between real-valued random variables $X$ and $Y$. This metric quantifies deviations in both temporal and inter-feature correlation structures.

    \item \textbf{Autocorrelation Discrepancy.} In order to assess the preservation of temporal dependencies at varying lags, we define an autocorrelation-based loss that compares the lagged autocorrelations of real and generated sequences. Let $X$ and $\hat{X}$ denote the real and generated datasets, respectively, and let $\mathrm{AC}_\tau(\cdot)$ represent the empirical autocorrelation at lag $\tau$. Then, the autocorrelation discrepancy is defined as:
    \begin{equation}
    \mathcal{L}_{\mathrm{AutoCorr}}(X, \hat{X}) = \frac{1}{L} \sum_{\tau=1}^{L} \left\lvert \mathrm{AC}_\tau(X) - \mathrm{AC}_\tau(\hat{X}) \right\rvert,
    \end{equation}
    where $L$ is the maximum lag considered. 
\end{enumerate}

\subsubsection{Architecture}
We reuse the Residual Architecture from Section \ref{sec:1d_network}.

\subsubsection{Extended Results}\label{subsec:results}

Figure~\ref{fig::rberg_samples} compares pathwise outputs of the \textbf{rBerg} teacher and Sig-DEG distilled models across price (top row) and volatility (bottom row). The left panels show the target trajectories in red, the centre panels display teacher-generated samples in green, and the right panels present Sig-DEG outputs in blue. For both price and volatility, the teacher model reproduces the empirical heteroskedasticity and clustered stochastic volatility -- key features of financial time series. Sig-DEG closely mirrors these dynamics despite using significantly fewer sampling steps, only 10 steps are used for Sig-DEG compared with 300 steps for the teacher model. 

Figures~\ref{fig::rberg_dist_prices} and~\ref{fig::rberg_dist_vols} display respectively the empirical marginal distributions of standardised log-returns and instantaneous volatilities, over 25 time steps. In both cases, the Sig-DEG model (blue) is compared against the teacher model (green) and the ground-truth distribution (red). The Sig-DEG model, despite relying on only 10 sampling steps, consistently reproduces key statistical features such as the heavy tails of the returns or the skewness. In certain instances, it surpasses the teacher model in reproducing the distribution, demonstrating not only competitive accuracy but also substantial gains in generation speed. Performance of the teacher model deteriorates towards later time steps, likely due to the absence of a memory mechanism such as a recurrent neural network (RNN), which impairs the teacher model’s temporal coherence and, consequently, the quality of the distilled samples from Sig-DEG.

Figures~\ref{fig::rberg10_prices} and~\ref{fig::rberg10_vol} illustrate the temporal dynamics of the log-return and instantaneous volatility processes, respectively, across six representative dimensions. Each figure presents three sets of trajectories: forward-time samples from the teacher diffusion model (left), backward-time samples from the same teacher model (middle), and intermediate approximations from our model, Sig-DEG (right). The Sig-DEG trajectories are generated using only 10 discretisation steps, yet qualitatively align with the backward samples, indicating the model's capacity to capture reverse-time dynamics efficiently. Temporal progression is denoted by colour, transitioning from purple to yellow.

\begin{figure}[ht]
    \centering
    \includegraphics[width=0.85  \linewidth]{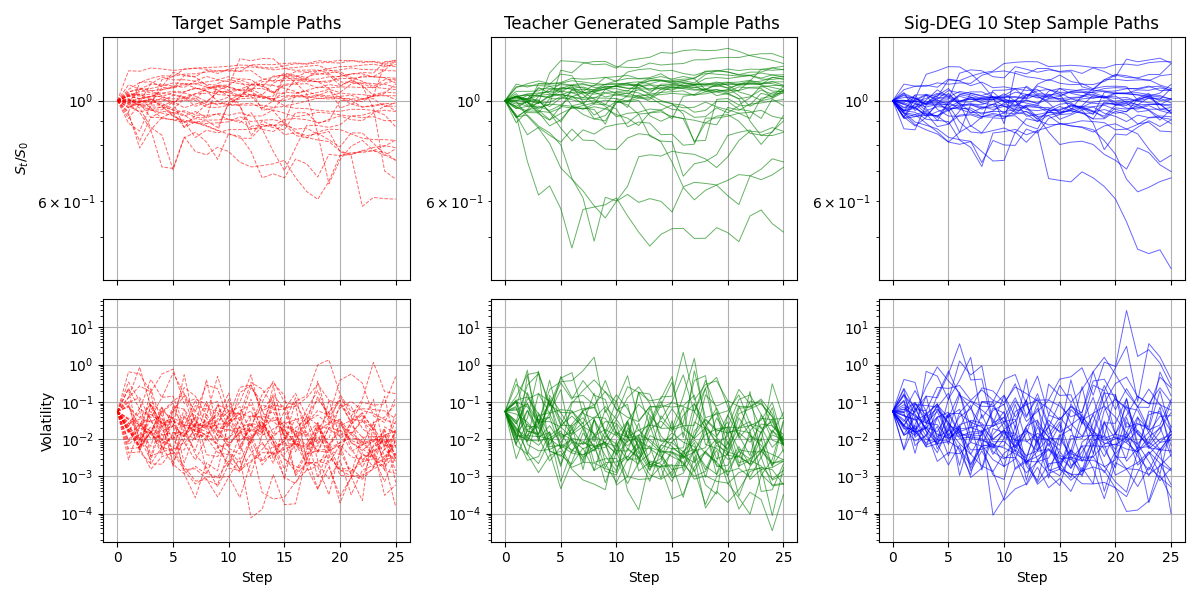}
    \caption{Path samples comparison of Sig-DEG compared to the teacher model and \textbf{rBerg} empirical data. The left column shows empirical price and volatility paths in red (target), the centre column contains teacher-generated samples in green, and the right column presents Sig-DEG trajectories in blue (distilled model). All paths are plotted on a logarithmic scale. The teacher model captures key characteristics of financial time series, including expanding price variance and volatility clustering. Sig-DEG effectively replicates these features using only 10 sampling steps, demonstrating robust approximation of high-dimensional diffusion dynamics with minimal temporal resolution.}
    \label{fig::rberg_samples}
\end{figure}

\begin{figure}[ht]
    \centering
    \includegraphics[width=0.7\linewidth]{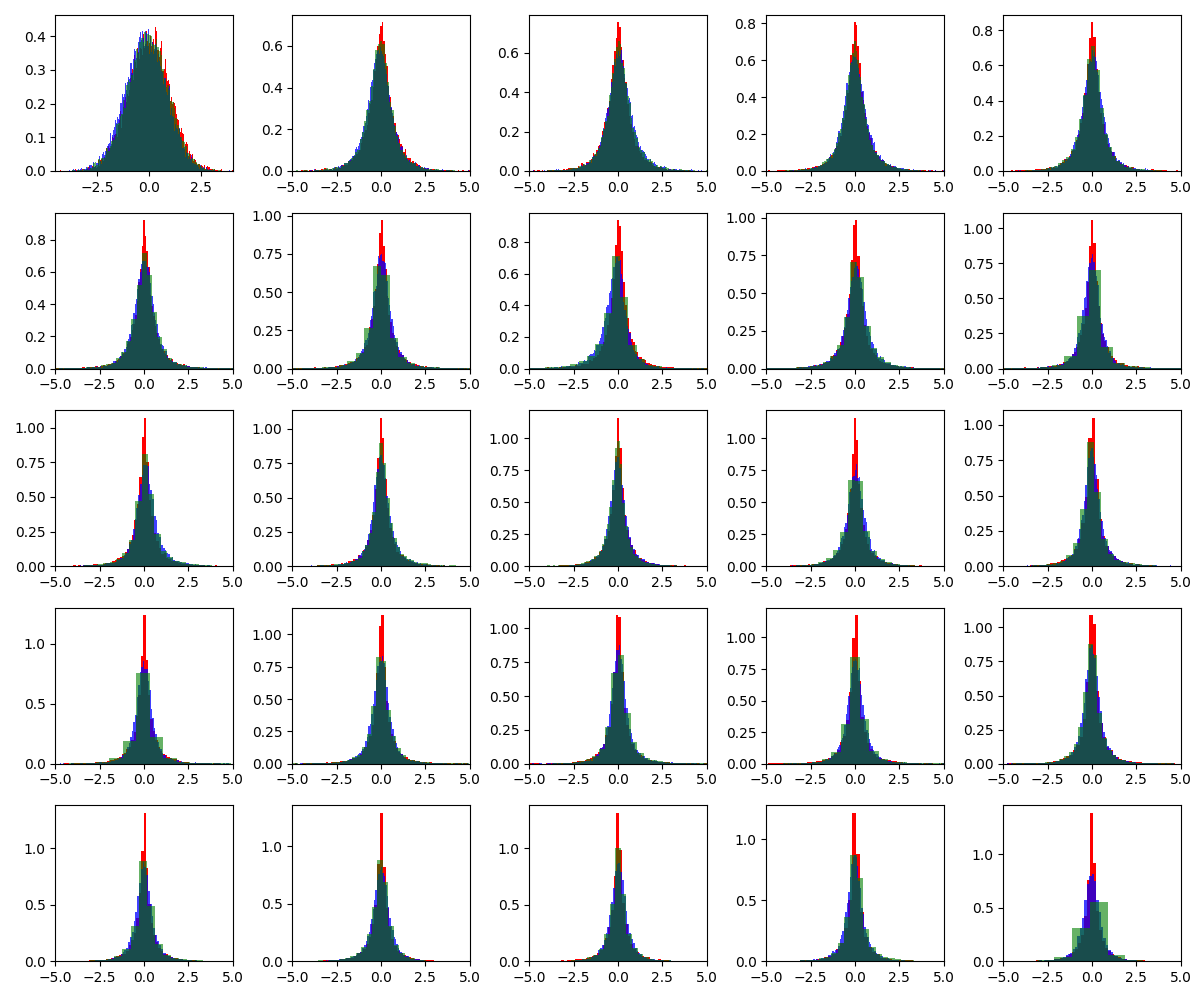}
    \caption{Empirical marginal distributions of standardised log-returns across 25 time steps. Red denotes the ground-truth, green the teacher model, and blue the Sig-DEG model (10 sampling steps). The Sig-DEG model often matches or exceeds the teacher’s performance while achieving significant computational speedup. Degradation in accuracy over time is attributed to the lack of a memory component (e.g., RNN), which limits the teacher’s temporal modelling capacity and, by extension, that of the distilled model.}
    \label{fig::rberg_dist_prices}
\end{figure}

\begin{figure}[ht]
    \centering
    \includegraphics[width= 0.7 \linewidth]{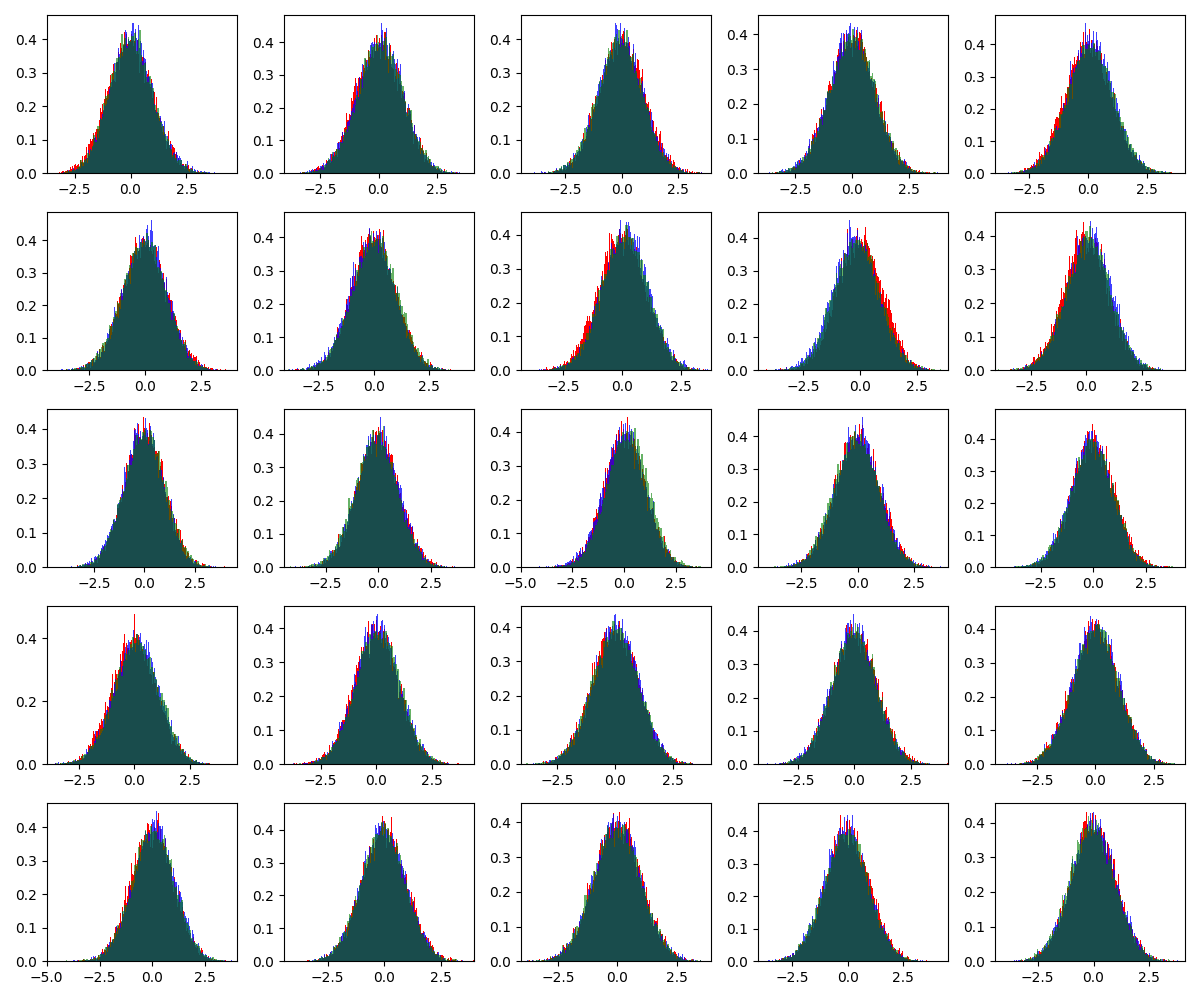}
    \caption{Empirical marginal distributions of standardised log-increments in instantaneous volatility at 25 time steps. The Sig-DEG model (blue) aligns closely with the teacher (green) and ground-truth (red), effectively capturing heavy-tailed and skewed behaviours.}
    \label{fig::rberg_dist_vols}
\end{figure}

\begin{figure}[ht]
    \centering
    \includegraphics[width=0.7 \linewidth]{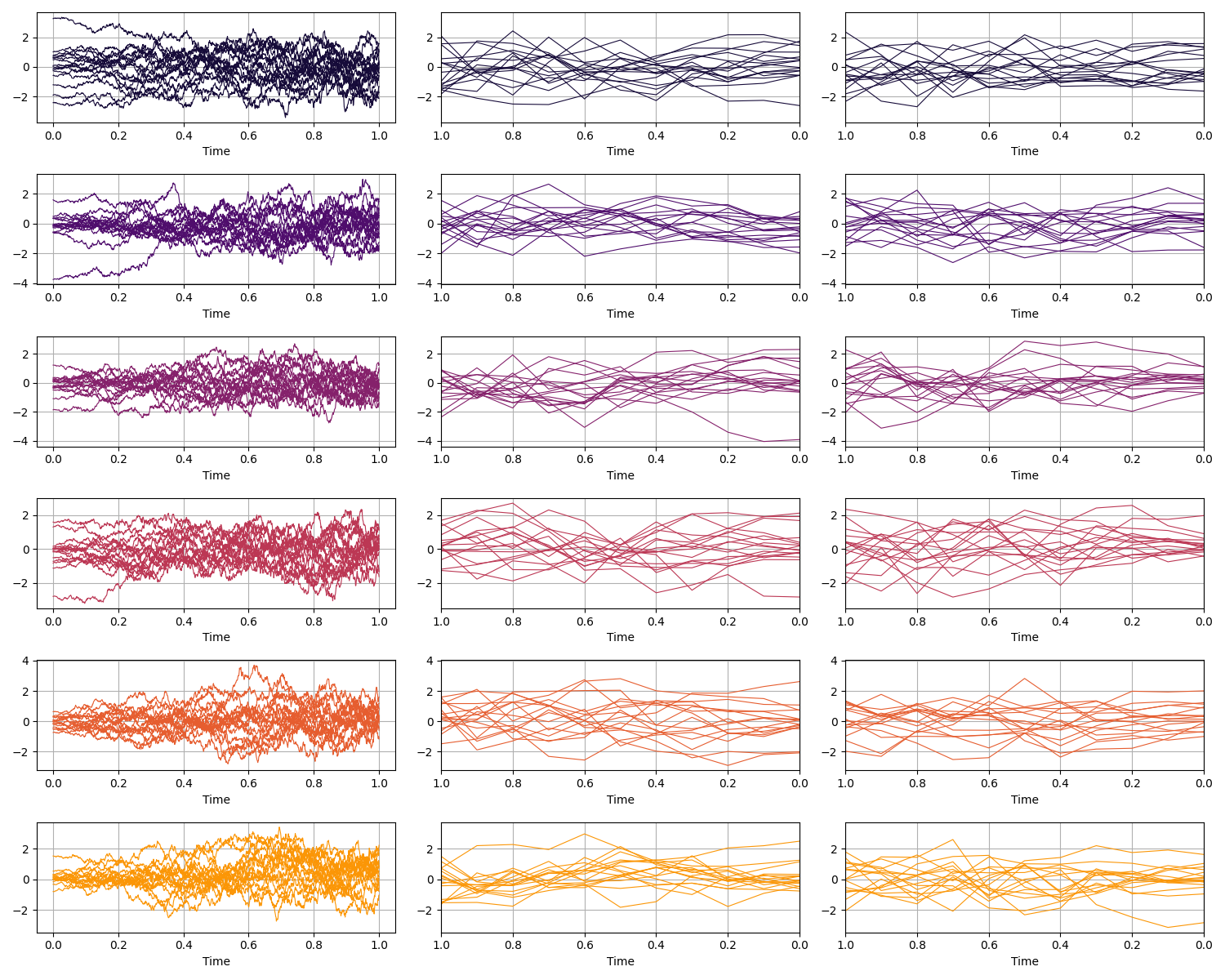}
    \caption{Trajectories of backward process for the log-return process across six representative dimensions. Left: forward-time samples from the teacher diffusion model. Middle: backward-time samples from the teacher diffusion, visualised with the temporal axis reversed. Right: approximate backward-time trajectories generated by Sig-DEG using only 10 discretisation steps. Temporal evolution is encoded via colour, progressing from purple (early) to yellow (late). }
    \label{fig::rberg10_prices}
\end{figure}

\begin{figure}[ht]
    \centering
    \includegraphics[width=0.7 \linewidth]{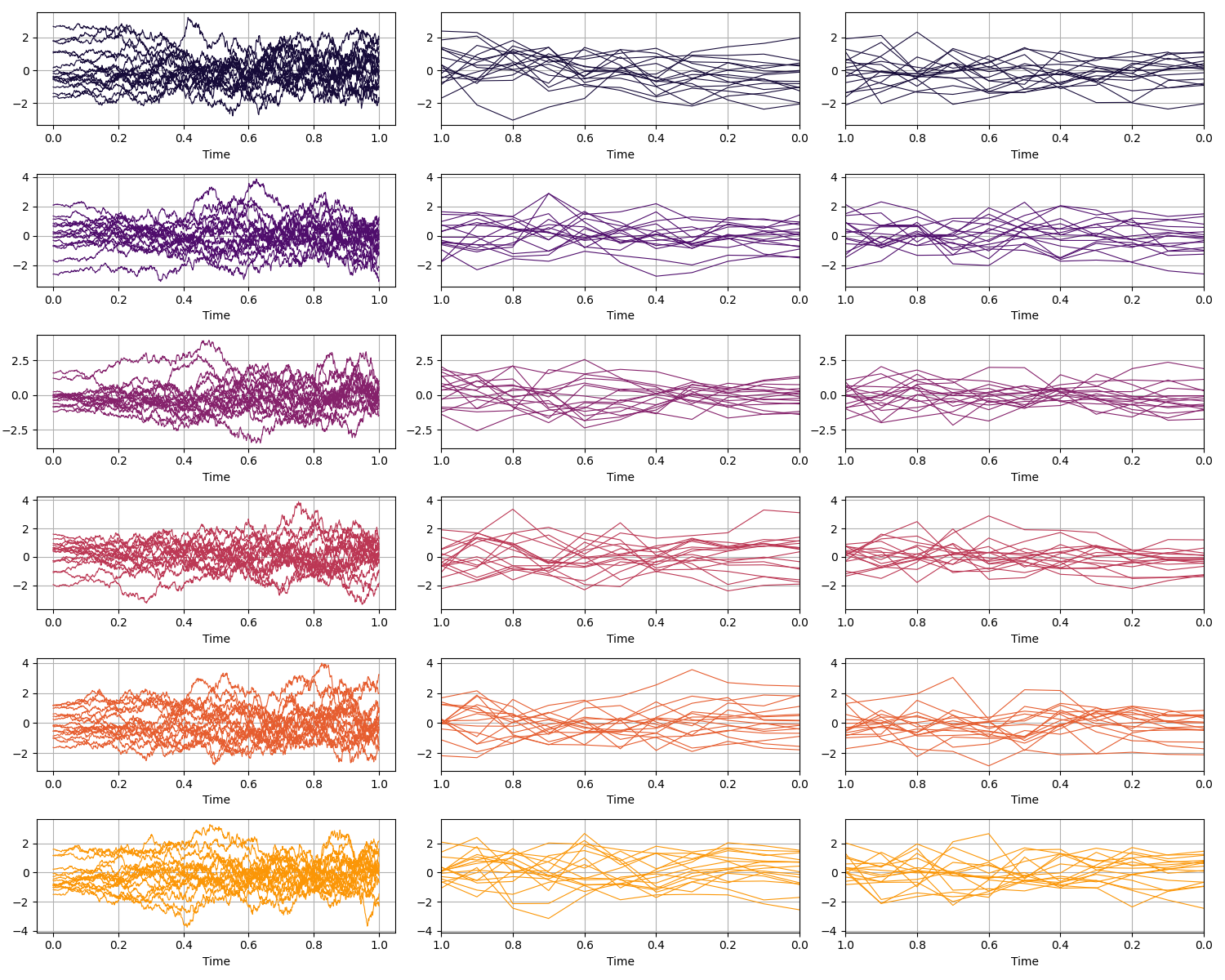}
    \caption{
    Trajectories of backward process for the instantaneous volatility process across six representative dimensions. Left: forward-time samples from the teacher diffusion model. Middle: backward-time samples from the teacher diffusion, visualised with the temporal axis reversed. Right: approximate backward-time trajectories generated by Sig-DEG using only 10 discretisation steps. Temporal evolution is encoded via colour, progressing from purple (early) to yellow (late). }
    \label{fig::rberg10_vol}
\end{figure}

\clearpage
\newpage

\end{document}